\title{Advanced Stationary and Non-Stationary Kernel Designs for Domain-Aware Gaussian Processes}
\author[1,*]{Marcus M. Noack}
\author[1,2]{James A. Sethian} 
\affil[1]{The Center for Advanced Mathematics for Energy Research 
Applications (CAMERA), Lawrence Berkeley National Laboratory, Berkeley, CA 94720}
\affil[2]{Department of Mathematics, University of California, Berkeley}
\affil[*]{MarcusNoack@lbl.gov}
\date{\today}
\begin{document}
\maketitle
\begin{abstract}
    Gaussian process regression is a widely-applied method for function approximation and uncertainty quantification. 
    The technique has gained popularity recently in the machine learning community due to its robustness and interpretability.
    The mathematical methods we discuss in this paper are an extension of the Gaussian-process 
    framework. We are proposing advanced kernel designs that only allow for functions 
    with certain desirable characteristics to be elements of the reproducing kernel Hilbert space (RKHS) that underlies all kernel methods
    and serves as the sample space for Gaussian process regression.
    These desirable characteristics reflect the underlying physics; two obvious examples are symmetry and periodicity constraints. 
    In addition, we want to draw attention to non-stationary kernel designs that can be defined in the same framework to yield
    flexible multi-task Gaussian processes. 
    We will show the impact of advanced kernel designs on Gaussian processes 
    using several synthetic and two scientific data sets. The results show that informing a Gaussian process of 
    domain knowledge, combined with additional flexibility, 
    and communicated through advanced kernel designs, has a significant impact on the accuracy and relevance of the function approximation.
\end{abstract}
\section{Introduction}
Gaussian processes (GPs) \cite{williams2006gaussian} 
provide a powerful mathematical framework for function approximation from data.
The associated technique is generally referred to as Gaussian process regression (GPR).
GPs are flexible, robust, non-parametric and naturally include uncertainty quantification. Given some data
$\mathcal{D}~=~\{\mathbf{x}_i,y_i\}$, the GP regression model assumes
$y(\mathbf{x})~=~f(\mathbf{x})+\epsilon(\mathbf{x})$.
Here, $\mathbf{x}$ is the position in some input or parameter space, $y$ is the associated noisy function evaluation, and $\epsilon(\mathbf{x})$ represents the noise term. 
The covariance matrix $\mathbf{\Sigma}$ of 
the prior Gaussian probability distribution 
is defined via kernel functions $k(\mathbf{x}_i,\mathbf{x}_j; \phi)$, where $\phi$ is a set of hyperparameters that are
commonly found by maximizing the marginal log-likelihood of the data.
Kernels induce an inner product in a Hilbert space and therefore impose a metric, which can be interpreted as a similarity measure.
The flexibility of kernel functions and therefore of the associated similarity measure is the main focus of this paper.

\vspace{2mm}
\noindent
Gaussian processes have been shown to be applicable when it comes to domain-aware approximations of model functions. By placing one or more prior Gaussian probability distributions over a carefully-defined function space, and using the posterior Gaussian distribution in a way that captures the desired features, we can take into account several data sets and a variety of domain knowledge bases. Generally speaking, the theory of Gaussian processes allows for four main possibilities to communicate domain knowledge:
\begin{enumerate}
\item
We can extract subspaces of the function space in such a way that all elements have certain desired characteristics. The extracted function space is the, so-called, reproducing kernel Hilbert space (RKHS). This can be accomplished by developing advanced designs for stationary and non-stationary kernels;
\item 
The prior can be placed and shaped in accordance with any domain knowledge that allows us to define a separate prior over the same function space; this prior can also depend on the hyperparameters. This can, for instance, be accomplished by a constrained log-likelihood optimization or by minimizing the Kullback-Leibler divergence between priors \cite{swiler2020survey, matschek2019constrained,wang2016estimating}; 
\item The acquisition function, which acts on the posterior, can be useful when searching for certain features of the posterior mean and covariance functions. For instance can high (or low) gradients or curvatures in the posterior be highlighted \cite{noack2020advances};
\item  Flexible multi-task Gaussian processes can be defined using non-stationary kernels. Kernels can be seen as a similarity measure; the more flexibility they have, the more can be learned across the input space and the different tasks. The advantage here is that, if enough flexibility is provided to the kernel, no other changes have to made to a single-task GP.
\end{enumerate}

\vspace{2mm}
\noindent
In this work, we focus on points 1 and 4, which focus on advanced kernel designs for stationary and non-stationary kernels. Kernels dictate which functions are part of the RKHS, and are therefore optimally suited to impose hard constraints on the posterior mean. One interesting example of high importance is the assumption of symmetry or periodicity of the posterior mean. One goal of this paper is to show that taking advantage of advanced kernel designs makes GPR significantly more accurate. Designing these kernels can be done by taking advantage of permitted operators, such as adding kernels and applying linear operators to them. Advanced kernel designs also allow for a very natural way of dealing with multi-modal data sets. In the GP literature, this is often referred to as multi-task, multi-output or multivariate regression problems. Since there is no natural distance between different tasks, many workarounds have been proposed \cite{borchani2015survey,yu2005learning}. However, no workaround is necessary if the kernels are given enough flexibility to find the optimal distances between tasks. That way, most common problems of multi-output GPR, such as missing data or missing cross-task covariances, are addressed or avoided.

\vspace{2mm}
\noindent
{\bf{Contributions.}} The contributions in this paper can be summarized as follows: (1) We show how to tailor kernel designs to communicate domain-knowledge to the GP, using both known stationary kernels as well as introducing and deriving new stationary kernels; (2) We show how to build customized non-stationary kernels, using previously-published but not well-established non-stationary kernel designs; and (3) We draw attention to a natural way to implement multi-task GPs by formulating them in terms of non-stationary kernel designs. As we will see, while this idea is not new, it deserves reevaluation in the presence of advanced kernel designs. 

\vspace{2mm}
\noindent
{\bf{Organization.}} This paper is organized as follows. First we will show the basic Gaussian process regression framework which takes advantage of the standard kernel classes. We will see that, while the standard kernels are very general, there are weaknesses associated with them, which lead to unnecessary inaccuracies of the approximation. Second, we will show the mathematics that is needed to make a Gaussian process domain aware by defining advanced kernel designs. These designs are partly known to the Gaussian-process community but largely unknown to the practitioner. While presenting them, we will show their impact on GP function approximations directly. Here we will also discuss kernels for multi-task Gaussian processes. Third, we will show the impact of the presented methodologies on experimental data which will be simulated by using previously-acquired scientific data sets.
\section{The Mathematics of Advanced Kernel Designs for Gaussian Processes} 
\subsection{Preliminaries}\label{sec:pre}
We define a set $\mathcal{X}_i \subset \mathbb{R}^{n_1}$, which is often referred to as the parameter space or the input space, and elements $\mathbf{x}_i~\in \mathcal{X}_i$.
We also define a set $\mathcal{X}_o \subset \mathbb{R}^{n_2}$ with elements
$\mathbf{x}_o~\in \mathcal{X}_o$, which represent the arbitrary but fixed indices of all function values of a vector-valued function whose domain consists of values that are elements of $\mathcal{X}_i$. 
These indices often have no physical meaning or equivalent,
and have to be chosen arbitrarily, which constitutes the main difficulty for multi-task regression. 
To tackle multi-output Gaussian process regression, we are defining the Cartesian
product space $\mathcal{X}~=~\mathcal{X}_i~\times~\mathcal{X}_o,~\mathcal{X}~\subset~\mathbb{R}^{n_1+n_2}$
with elements $\mathbf{x}=[\mathbf{x}_i,\mathbf{x}_o]^T$. 
We call this set the index set, because the functions defined on it are elements of 
a function space we will define later. Note however, that $\mathcal{X}_i$ as well as $\mathcal{X}_o$ are considered index sets, since their elements index function values of functions that are themselves elements of a set. The assumption that the input and output spaces are a subspace of the Euclidean space is not a strict requirement but used here for simplicity.
We define a total of five functions on $\mathcal{X}$.
First, the latent function $f=f(\mathbf{x})$ which can be interpreted as the
inaccessible ground truth. 
Second, the often noisy measurements 
$y=y(\mathbf{x}):~\mathcal{X}~\rightarrow~\mathbb{R}$.
Third, the surrogate model function, which is defined
as $\rho=\rho(\mathbf{x}):~\mathcal{X}~\rightarrow~\mathbb{R}$. 
Fourth, the posterior mean function $m(\mathbf{x})$.
Fifth, the posterior variance function $\sigma^2(\mathbf{x})$.
We note that typically in multi-task GPR, those functions are vector-valued functions.
Since we are introducing the Cartesian product space $\mathcal{X}$, this is not necessary;
we have effectively reduced a multi-output Gaussian process to a single-output Gaussian process.
Since we are not interpreting the tasks as a set of functions on $\mathcal{X}_i$ but instead as a scalar function on $\mathcal{X}_o$, we refer to 
this as a function-valued GP. For instance, the output could be defined on $\mathbb{R}$, which leads to $\mathcal{X}=\mathcal{X}_i~\times~\mathbb{R}$; in this case the output is a function on $\mathbb{R}$.
The general concept of transforming a multi-output GP to a single-output GP is not new, and is normally referred to as single-target method or output-as-input-view \cite{van2020framework} and criticized for not taking into account cross-task covariances \cite{borchani2015survey}; however, this criticism only applies to stationary kernels. One of the goals of this paper is to achieve 
cross-task covariances by defining flexible non-stationary kernels.

\vspace*{2mm}
\noindent
Next, we define a pre-Hilbert space
\begin{align}
    \mathcal{H}=\{f(\mathbf{x}):f(\mathbf{x})=\sum_i^N \alpha_i k(\mathbf{x}_i,\mathbf{x}), \forall \pmb{\alpha}\in \mathbb{R}^N, \mathbf{x}\in \mathbb{R}^n\},
    \label{eq:RKHS}
\end{align}
with covariance function $k(\mathbf{x}_i,\mathbf{x})$; $N$ is the number of data point locations, and 
$f(\mathbf{x})$ is the unknown latent function. Strictly speaking, Equation \eqref{eq:RKHS} is not a full infinite-dimensional pre-Hilbert space but a finite-dimensional sub-space spanned by the data. For it to qualify as a Hilbert space we have to equip the space with the norm $||f||_{\mathcal{H}}=\sqrt{\langle f,f\rangle_{\mathcal{H}}}$ and add all limit points of sequences that converge in that norm. As a reminder, note that scalar functions over $\mathcal{X}$, e.g. $f(\mathbf{x})$,
are vectors (bold typeface) in $\mathcal{H}$.
A kernel induces an inner product of two elements $\in \mathcal{H}$, i.e,\\ $\langle f_1(x),f_2(x)\rangle_{\mathcal{H}}~=~\sum_i \sum_j \alpha_i \beta_j k(\mathbf{x}_i,\mathbf{x}_j)$, where $\alpha_i$ and $\beta_j$ are coefficients.
\begin{definition}\label{def:kernel}
A kernel is a symmetric and positive semi-definite (p.s.d.) function $k(\mathbf{x_1},\mathbf{x_2})$,
$\mathcal{X}~\times~\mathcal{X}\rightarrow~\mathbb{R}$,\\
it therefore satisfies
$\sum_i^N\sum_j^N~ c_i~c_j~k(x_i,x_j)~
\geq~0~\forall N,~\mathbf{x}\in~\mathcal{X}, ~\mathbf{c}\in~\mathbb{R}^N$
\end{definition}

\vspace*{2mm}
\noindent
Given that definition, it is clear that the set of kernels is closed under addition, multiplication
and linear transformation \cite{ginsbourger2013kernels}, which we will build upon later on.
Given the definition of the pre-Hilbert space (Equation \ref{eq:RKHS}), it can be shown that for elements of
$\mathcal{H}$
\begin{align}
    \langle k(\mathbf{x}_0,\mathbf{x}),f(\mathbf{x}) \rangle~=~f(\mathbf{x}_0),
    \label{eq:reproducing}
\end{align}
which is the reason the completion of the space $\mathcal{H}$ is called Reproducing Kernel Hilbert Space. 
Reproducing in that context refers to the fact that the inner product of a kernel, evaluated at a point, with a function produces the function at that point; the inner product ``reproduces'' the function value, which is the essence of Equation \eqref{eq:reproducing}.
Gaussian processes are based on defining a prior probability distribution over the RKHS. 
In this case the kernels are understood as covariances
\begin{align}
    k(\mathbf{x}_1,\mathbf{x}_2)~=~\int_\mathcal{H}~f(\mathbf{x}_1)f(\mathbf{x}_2)q(f)~df,
    \label{eq:cov_kernel}
\end{align}
where $q$ is some density function.

\subsection{A Birds-Eye View on Gaussian Processes}
Given data $\mathcal{D}~=~\{\mathbf{x}_i,y_i\}$, a prior probability distributions over functions $f(\mathbf{x})$ can be defined as
\begin{equation}
    p(\mathbf{f})=\frac{1}{\sqrt{(2\pi)^\mathrm{dim}|\mathbf{K}|}}
    \exp \left[ -\frac{1}{2}(\mathbf{f}-\pmb{\mu})^T \mathbf{K}^{-1}(\mathbf{f}-\pmb{\mu}) \right],
    \label{eq:prior}
\end{equation}
where $\mathbf{K}$ is the covariance matrix of the data, calculated by applying the covariance kernel
$k(\mathbf{x}_1,\mathbf{x}_2)$ (see Definition \ref{def:kernel}) to
the data positions, and $\pmb{\mu}$ is the prior mean vector. 
We can define the likelihood over functions $y(\mathbf{x})$ as
\begin{equation}
    p(\mathbf{y},\mathbf{f})=\frac{1}{\sqrt{(2\pi)^\mathrm{dim}|\mathbf{V}|}}
    \exp \left[ -\frac{1}{2}(\mathbf{y}-\mathbf{f})^T \mathbf{V}^{-1} (\mathbf{y}-\mathbf{f}) \right],
    \label{eq:likelihood_iid}
\end{equation}
where $\mathbf{V}$ is the matrix of the non-i.i.d.~noise \cite{noack2020autonomous}.
The noise is responsible for the difference between the unknown latent function $f(\mathbf{x})$ and the measurements $y(\mathbf{x})$.
In the standard literature, often only i.i.d.~noise is discussed,
which is insufficient for many applications \cite{noack2020autonomous}. 

\vspace*{2mm}
\noindent
The vast majority of work published about Gaussian processes uses only a handful of
standard kernels to compute covariances. By far the most frequently used kernel is the squared exponential kernel
\begin{equation}
    k(\mathbf{x}_1,\mathbf{x}_2)~=~\sigma_s^2\exp{[-\frac{||\mathbf{x}_1-\mathbf{x}_2||^2}{2l^2}]},
\end{equation}
where $\sigma_s^2$ is the signal variance and $l$ is the length scale 
which can be anisotropic, as we will see later.
The signal variance and the length scales are examples of the so-called hyperparameters of the Gaussian process
and are calculated by solving
\begin{align}
    \argmax_{\sigma_s^2,\phi} \Big(
    &\log(L(D;\sigma_s^2,\phi))~=\nonumber \\ 
    &~-\frac{1}{2}(\mathbf{y}-\pmb{\mu}(\phi))(\mathbf{K}(\phi)+\mathbf{V})^{-1}(\mathbf{y}-\pmb{\mu}(\phi)) \nonumber \\ 
    &~-\frac{1}{2}\log(|\mathbf{K}(\phi)+\mathbf{V}|)
                         -\frac{\dim(\mathbf{y})}{2}\log(2\pi)\Big).\label{eq:likelihood}                     
\end{align}
Given the hyperparameters, we can calculate and condition the joint prior 
\begin{equation}
    p(\mathbf{f},\mathbf{f}_0)=\frac{1}{\sqrt{(2\pi)^\mathrm{dim}|\mathbf{\Sigma}|}}
    \exp \left[ -\frac{1}{2} \Big (
    \begin{bmatrix} \mathbf{f}-\pmb{\mu} \\  \mathbf{f}_0-\pmb{\mu}_0 \end{bmatrix}^T 
    \mathbf{\Sigma}^{-1}
    \begin{bmatrix} \mathbf{f}-\pmb{\mu} \\  \mathbf{f}_0-\pmb{\mu}_0 \end{bmatrix} \Big )\right ] ,
    \label{eq:joint_prior}
\end{equation}
where 
\begin{align}
   \Sigma~=~
   \begin{pmatrix}
   \mathbf{K}     & \pmb{\kappa} \\
   \pmb{\kappa}^T & \pmb{\mathcal{K}}
   \end{pmatrix},
   \label{eq:joint_covariance_matrix}
\end{align}
to obtain the well-known
posterior
\begin{align}\label{eq:pred_distr}
    p(\mathbf{f}_0|\mathbf{y})~&=~\int_{\mathbb{R^N}} 
    p(\mathbf{f}_0|\mathbf{f},\mathbf{y})~p(\mathbf{f},\mathbf{y})~d\mathbf{f} \nonumber \\
    &~\propto \mathcal{N}(\pmb{\mu}+\pmb{\kappa}^T~
    (\mathbf{K}+\mathbf{V})^{-1}~(\mathbf{y}-\pmb{\mu}), \pmb{\mathcal{K}} -
    \pmb{\kappa}^T~(\mathbf{K}+\mathbf{V})^{-1}~\pmb{\kappa}),
\end{align}
where $\pmb{\kappa}_i~=~k(\mathbf{x}_0,\mathbf{x}_i,\phi)$, 
$\pmb{\mathcal{K}}~=~k(\mathbf{x}_0,\mathbf{x}_0,\phi)$ and $\mathbf{K}_{ij}~=~k(\mathbf{x}_i,\mathbf{x}_j,\phi)$. $\mathbf{x}_0$ are the points at which
the Gaussian posterior should be predicted. $\mathbf{f}_0$ are values of the latent function $f$ at the points $\mathbf{x}_0$. The posterior contains the posterior mean $m(\mathbf{x})$
and the posterior variance $\sigma^2(\mathbf{x})$ (see also Section \ref{sec:pre}).
\subsection{Basic Kernel Design and its Weaknesses}
In standard Gaussian process regression, we assume one function value (single task) to be approximated
on the index set. 
Distances within the index set are generally assumed to be isotropic and Euclidean.
Moreover, we often assume first and second order stationarity of the process. These assumptions translate into kernels of the form 
\begin{align}
    k(\mathbf{x}_1,\mathbf{x}_2)~=~k(||\mathbf{x}_1 - \mathbf{x}_2||,\sigma_s^2,l),
\end{align}
where $l$ is the isotropic and constant length scale and $\sigma_s^2$ is the constant signal variance, and constant prior-mean functions.
Therefore, independent of the dimensionality of the index set, we only have 
to solve Equation \eqref{eq:likelihood}
for two hyperparameters, one signal variance and one length scale. 
In addition, we do not assume any particular characteristics of the function to be explored, which translates
to the use of standard kernels (e.g. Mat\'ern, exponential, squared exponential). 
In the vast majority of published work, 
the squared exponential kernel is used \cite{pilario2020review}. 
Also, when several tasks are involved, they are often assumed to be independent in the standard GP framework.
If the tasks are assumed to be correlated, the used methods are based on significant augmentations of the basic GP theory \cite{borchani2015survey}.

\vspace{2mm}
\noindent
While the standard approach yields an agnostic and widely applicable approach to 
regression,
it also has some considerable drawbacks. 
In this paper, we focus on two of them: 

\begin{itemize}
    \item 
First, by defining any kernel, the user implicitly chooses which functions --- carrying hidden restrictions or assumptions --- are elements of the RKHS. 
Often, this is done by accident without knowing what is imposed.
For instance, using the squared exponential kernel imposes an infinite order of 
differentiability onto the posterior mean functions, even if this 
assumption is actually not backed by physics or other domain knowledge.
Limiting the approximated function to certain characteristics that are not reasonable should be avoided.
An alternative, which is one focus of this paper and discussed below, exploits the fact that the user often knows certain local and global characteristics of the posterior mean. 
Enforcing them can yield a vastly improved accuracy of the approximation. 
In the case of stationary kernel designs, this is due to extra information that is 
propagated to unexplored regions of the index set. 
\item
Second, stationary kernels do not have the flexibility to encode varying similarities across
the index set. That means, two points in one corner of the domain will have the same inner product as two
other points in the other corner of the domain, as long as distances between the points are the same; 
the inner product and therefore the similarity will not depend on the respective location of the point pairs. 
This makes it difficult to learn similarities across tasks, since there is no natural distance
between tasks. One alternative we want to draw the reader's attention to, 
is to addresses this issue by adding extra flexibility to non-stationary 
kernels which translates into a method that is able to learn more complicated
patterns of the data set across the input and the output space (tasks). 
In doing so, and in contrast to standard methods, the basic theory of GPs remains unchanged, 
and the entire difference between single-task and multi-task GPs is contained within the kernel design, maintaining the inherent robustness of a GP and avoiding common problems such as missing data, assumed linear dependence of tasks, and reduced interpretability. 
\end{itemize}
We want to note here that there are many methods to address the issues with 
multi-task Gaussian processes. See \citet{borchani2015survey} for a comprehensive overview.

\vspace{2mm}
\noindent
To reiterate, the main goal in this work is to define positive semi-definite functions
\begin{align}
    k:\mathcal{X}~\times~\mathcal{X}~\rightarrow \mathbb{R}
\end{align}
that serve as stationary and non-stationary kernel functions and 
effectively extract a RKHS in such a way that it only contains functions with certain desirable characteristics
and is able to encode and learn complicated cross-task covariances.

\vspace{2mm}
\noindent
We next discuss stationary kernels, showing how to incorporate hard constraints on the posterior mean. This is followed by a generalization to non-stationary kernels, showing how they can provide a framework for multi-task GPs. 
\subsection{Advanced Stationary Kernel Designs for Hard Constraints on the Posterior Mean}
Stationary kernels are positive definite functions
of the form 
\begin{align}
    k(\mathbf{x}_1,\mathbf{x}_2)~=~k(||\mathbf{x}_1-\mathbf{x}_2||),
\end{align}
where $||\cdot||$ is some norm. The Euclidean norm is used in the overwhelming majority of studies.

\vspace*{2mm}
\noindent
The set of kernel functions is closed under addition, multiplication, and application of linear operators.
Therefore, kernel functions can be combined in many ways to formulate powerful definitions of similarity
between data points. We will see that this can be used to inform the process that similarity is recurrent in
$\mathcal{X}$ or follows a certain structure.
\subsubsection{Stationary Kernels Constraining Differentiability}
The Mat\'ern kernel class is defined as
\begin{align}
       k(\mathbf{x}_1,\mathbf{x}_2)=\frac{1}{2^{v-1}\Gamma(v)}
\Big( \frac{\sqrt{2v}}{l}r\Big)^v B_v \Big( \frac{\sqrt{2v}}{l}r\Big),
\end{align}
where $r$ is some metric in $\mathcal{X}$, $B_v$ is the modified Bessel function and $v$ is the parameter
controlling the differentiability. Combined with anisotropic kernel definitions, a practitioner can control the level of differentiability in each direction of the input space. 
This is a rather well-known characteristic of kernels and is included here for completeness. 
\subsubsection{Kernels for Additive Functions}
Approximating a function of the form $\sum_i~g_i(x_i)$ can be accomplished by choosing a  Gaussian process with defining kernel
\begin{align}
    k(\mathbf{x}_1,\mathbf{x}_2)~=~\sum_i~k_i(\mathbf{x}_1^i,\mathbf{x}_2^i).
\end{align}
The resulting process can propagate information about the function into regions of the index set where information is given only in ($n-1$-dimensional) subspaces. Figure \ref{fig:additive} shows an example of how additive kernels can be used.
While points are only given in a sub-space of the index set, the information can be propagated into all of $\mathcal{X}$. 
The standard kernel used in the example shown in Figure \ref{fig:additive} is defined as
\begin{equation}\label{eq:st_kernel}
    k(\mathbf{x}_1,\mathbf{x}_2)~=~2~\exp{[-\frac{|\mathbf{x}_1-\mathbf{x}_2|}{0.5}]},
\end{equation}
where $|\cdot|$ denotes the Euclidean distance in $\mathcal{X}$.
The additive kernel is defined as
\begin{equation}\label{eq:add_kernel}
    k(\mathbf{x}_1,\mathbf{x}_2)~=~\exp{[-\frac{|x_1^1-x_2^1|}{0.5}]} + \exp{[-\frac{|x_1^2-x_2^2|}{0.5}]}.
\end{equation}
Figure \ref{fig:additive} shows how powerful the knowledge of additivity can be; information can be propagated far away from the available data along axes directions. 
\begin{figure}
    \centering
    \begin{subfigure}[t]{0.40\linewidth}
    \includegraphics[width = \linewidth]{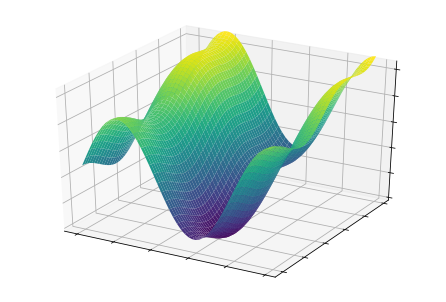}
    \end{subfigure}    
    \begin{subfigure}[t]{0.40\linewidth}
    \includegraphics[width = \linewidth]{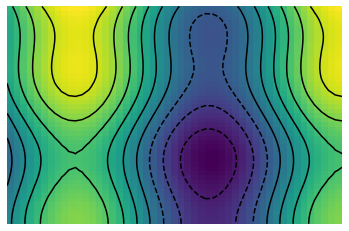}
    \put(-360,40){\rotatebox{90}{\makebox(0,0)[lb]{ground truth}}}
    \put(-190,90){{\makebox(0,0)[lb]{$1.6$}}}
    \put(-195,35){{\makebox(0,0)[lb]{$-1$}}}
    \put(-280,120){{\makebox(0,0)[lb]{side view}}}
    \put(-100,120){{\makebox(0,0)[lb]{top view}}}
    \end{subfigure}
    
    \vspace{-2mm}
    \begin{subfigure}[t]{0.40\linewidth}
    \includegraphics[width = \linewidth]{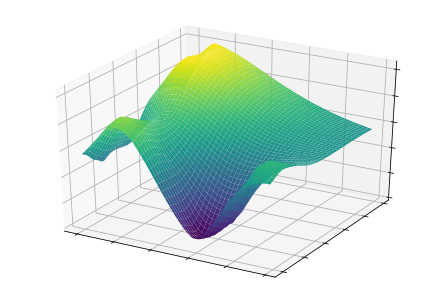}
    \end{subfigure}    
    \begin{subfigure}[t]{0.40\linewidth}
    \includegraphics[width = \linewidth]{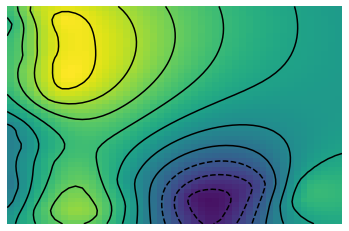}
    \put(-360,10){\rotatebox{90}{\makebox(0,0)[lb]{standard kernel post. mean}}}
    \put(-190,90){{\makebox(0,0)[lb]{$1.6$}}}
    \put(-195,35){{\makebox(0,0)[lb]{$-1$}}}
    \end{subfigure}
    
    \vspace{-2mm}
    \begin{subfigure}[t]{0.40\linewidth}
    \includegraphics[width = \linewidth]{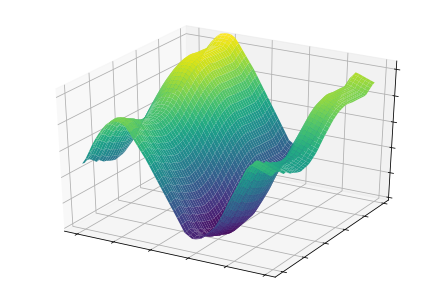}
    \end{subfigure}    
    \begin{subfigure}[t]{0.40\linewidth}
    \includegraphics[width = \linewidth]{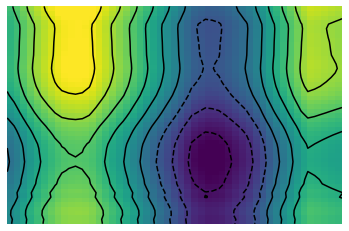}
    \put(-360,10){\rotatebox{90}{\makebox(0,0)[lb]{additive kernel post mean}}}
    \put(-190,90){{\makebox(0,0)[lb]{$1.6$}}}
    \put(-195,35){{\makebox(0,0)[lb]{$-1$}}}
    \end{subfigure}
    
    \vspace{-2mm}
    \begin{subfigure}[t]{0.40\linewidth}
    \includegraphics[width = \linewidth]{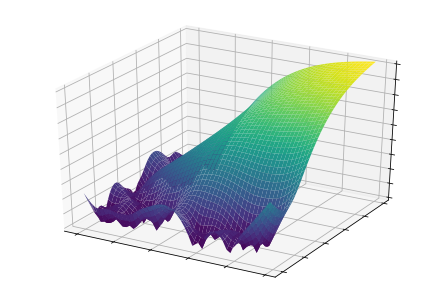}
    \end{subfigure}    
    \begin{subfigure}[t]{0.40\linewidth}
    \includegraphics[width = \linewidth]{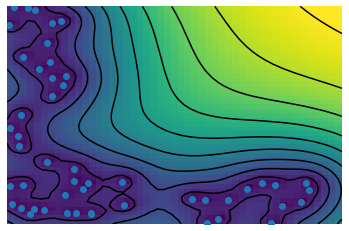}
    \put(-360,10){\rotatebox{90}{\makebox(0,0)[lb]{standard kernel post var}}}
    \put(-190,90){{\makebox(0,0)[lb]{$1.8$}}}
    \put(-195,35){{\makebox(0,0)[lb]{$0$}}}
    \end{subfigure}
    
    \vspace{-2mm}
    \begin{subfigure}[t]{0.40\linewidth}
    \includegraphics[width = \linewidth]{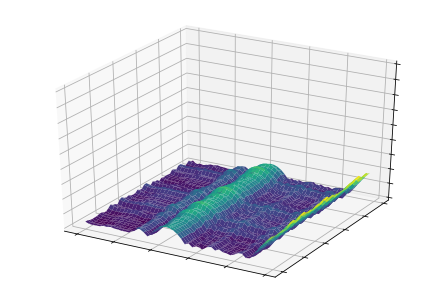}
    \end{subfigure}    
    \begin{subfigure}[t]{0.40\linewidth}
    \includegraphics[width = \linewidth]{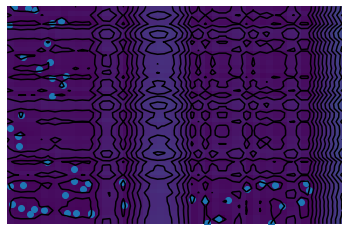}
    \put(-360,10){\rotatebox{90}{\makebox(0,0)[lb]{additive kernel post var}}}
    \put(-190,90){{\makebox(0,0)[lb]{$1.8$}}}
    \put(-195,35){{\makebox(0,0)[lb]{$0$}}}
    \end{subfigure}
    
    \begin{subfigure}[t]{0.40\linewidth}
    \includegraphics[width = \linewidth]{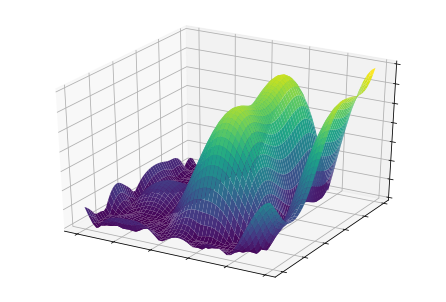}
    \end{subfigure}    
    \begin{subfigure}[t]{0.40\linewidth}
    \includegraphics[width = \linewidth]{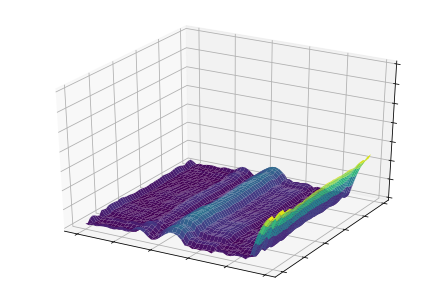}
    \put(-360,30){\rotatebox{90}{\makebox(0,0)[lb]{$|difference|$}}}
    \put(-280,110){{\makebox(0,0)[lb]{standard kernel}}}
    \put(-100,110){{\makebox(0,0)[lb]{additive kernel}}}
    \put(-190,90){{\makebox(0,0)[lb]{$1.4$}}}
    \put(-195,35){{\makebox(0,0)[lb]{$0$}}}
    \end{subfigure}
    \caption{Standard kernel (Equation \eqref{eq:st_kernel}) vs additive kernel (Equation \eqref{eq:add_kernel}) function to approximate a function on $\mathcal{X}=[0,1]\times[0,1]$. Using the additive kernel means propagating information into regions where no data is available. The estimated variances are significantly smaller compared to the use of standard kernels.}
    \label{fig:additive}
\end{figure}
\subsubsection{Anisotropy of Distance Measures on $\mathcal{X}$}
In addition to summation, one can also combine kernels by a product.
In both cases, every direction
can have its own length scale, giving rise to one formulation of anisotropy
\begin{align}
    k(\mathbf{x}_1,\mathbf{x}_2)~&=~\exp{[-\frac{|x_1^1-x_2^1|}{l_1}]} + \exp{[-\frac{|x_1^2-x_2^2|}{l_2}]} \hspace{.2in} [Additive] \label{add_ani} \\
    k(\mathbf{x}_1,\mathbf{x}_2)~&=~\exp{[-\frac{|x_1^1-x_2^1|}{l_1}]} ~ \exp{[-\frac{|x_1^2-x_2^2|}{l_2}]}
    \hspace{.2in}[Multiplicative] \label{mult_ani}.
\end{align}
However, the additive kernel comes with additional properties presented in the last section. If the model function is not additive,
the use of kernel \eqref{add_ani} will lead to wrong predictions.
Another way of implementing anisotropy is by altering the Euclidean distance in $\mathcal{X}$ with a different metric such that
\begin{align}
    k(\mathbf{x}_1,\mathbf{x}_2) = k((\mathbf{x}_1-\mathbf{x}_2)^T~\mathbf{M}~(\mathbf{x}_1-\mathbf{x}_2)),
\end{align}
where $M$ is any symmetric positive definite matrix.
More on that can be found in \cite{williams2006gaussian,noack2020autonomous}.
Anisotropy plays an important role in many data sets and its inclusion is vitally important  \cite{noack2020autonomous}.
\subsubsection{Linear Operators Acting on Kernels}
\citet{ginsbourger2013kernels} pointed out that kernels can be passed through linear operators.
\begin{theorem}
If $k(x_1,x_2)$ is a kernel, then $L_{x_1}(L_{x_2}(k))$ is also a valid kernel function.
\end{theorem}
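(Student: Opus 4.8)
The plan is to exploit the covariance interpretation of kernels recorded in Equation~\eqref{eq:cov_kernel}: a symmetric positive semi-definite function is precisely the covariance of a (zero-mean) stochastic process, or, equivalently, it admits a feature map $\Phi:\mathcal{X}\to\mathcal{H}$ with $k(\mathbf{x}_1,\mathbf{x}_2)=\langle\Phi(\mathbf{x}_1),\Phi(\mathbf{x}_2)\rangle_{\mathcal{H}}$. The driving idea is that a linear operator sends covariances to covariances: if $Z$ is a process with covariance $k$, then the transformed process $LZ$ (applied pathwise, or in mean square) has covariance $L_{\mathbf{x}_1}L_{\mathbf{x}_2}k$, which is therefore again symmetric and positive semi-definite. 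So the proof reduces to verifying the two conditions of Definition~\ref{def:kernel} for the function $\tilde k(\mathbf{x}_1,\mathbf{x}_2):=L_{\mathbf{x}_1}\big(L_{\mathbf{x}_2}(k)\big)(\mathbf{x}_1,\mathbf{x}_2)$.

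For symmetry, note that $L_{\mathbf{x}_1}$ and $L_{\mathbf{x}_2}$ act on disjoint variables and hence commute, so $\tilde k(\mathbf{x}_1,\mathbf{x}_2)=L_{\mathbf{x}_2}\big(L_{\mathbf{x}_1}(k)\big)(\mathbf{x}_1,\mathbf{x}_2)$; combining this with $k(\mathbf{x}_1,\mathbf{x}_2)=k(\mathbf{x}_2,\mathbf{x}_1)$ and relabelling $\mathbf{x}_1\leftrightarrow\mathbf{x}_2$ gives $\tilde k(\mathbf{x}_1,\mathbf{x}_2)=\tilde k(\mathbf{x}_2,\mathbf{x}_1)$, and the value stays real provided $L$ maps real-valued functions to real-valued functions. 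For positive semi-definiteness, fix $N$, points $\mathbf{x}_1,\dots,\mathbf{x}_N\in\mathcal{X}$ and coefficients $c_1,\dots,c_N\in\mathbb{R}$. Using the feature map one has $\tilde k(\mathbf{x}_i,\mathbf{x}_j)=\langle (L\Phi)(\mathbf{x}_i),(L\Phi)(\mathbf{x}_j)\rangle_{\mathcal{H}}$, so that
\[
\sum_{i=1}^{N}\sum_{j=1}^{N} c_i c_j\,\tilde k(\mathbf{x}_i,\mathbf{x}_j)=\Big\langle \sum_{i=1}^{N} c_i (L\Phi)(\mathbf{x}_i),\ \sum_{j=1}^{N} c_j (L\Phi)(\mathbf{x}_j)\Big\rangle_{\mathcal{H}}=\Big\|\sum_{i=1}^{N} c_i (L\Phi)(\mathbf{x}_i)\Big\|_{\mathcal{H}}^{2}\ \geq\ 0,
\]
which is exactly the inequality in Definition~\ref{def:kernel}. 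A route that avoids feature maps altogether: when $L$ is a differential (or, more generally, a closed) operator, one approximates $L_{\mathbf{x}}$ by finite linear combinations of evaluation functionals (difference quotients); each approximant rewrites the double sum as $\sum_{p,q}\tilde c_p\tilde c_q\, k(\tilde{\mathbf{x}}_p,\tilde{\mathbf{x}}_q)\geq 0$, and nonnegativity is preserved in the limit.

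The one genuinely delicate point — and the step I expect to be the main obstacle — is well-posedness rather than the algebra of the previous paragraph: one must check that $L$ is actually applicable, i.e.\ that the kernel sections $k(\cdot,\mathbf{x}_2)$ and the feature map $\Phi$ lie in the domain of $L$, and that the interchange $L_{\mathbf{x}_1}\langle\Phi(\cdot),\Phi(\mathbf{x}_2)\rangle_{\mathcal H}=\langle (L\Phi)(\cdot),\Phi(\mathbf{x}_2)\rangle_{\mathcal H}$ is legitimate. For bounded $L$ this is immediate; for an unbounded $L$ such as differentiation it is the classical fact that a process is mean-square differentiable exactly when the mixed partial derivatives of its covariance exist, the derivative process then having covariance $\partial_{\mathbf{x}_1}\partial_{\mathbf{x}_2}k$, so the regularity hypotheses on $k$ must be stated to match $L$. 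For the operators actually used in the paper (low-order differential operators and their finite real-linear combinations, acting on smooth kernels) these conditions hold, and since the set of kernels is closed under addition and nonnegative scaling, the single-operator statement extends verbatim to any such linear combination.
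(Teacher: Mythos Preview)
Your argument is correct, and indeed more careful than anything the paper offers: the paper does not prove this theorem at all. It is simply stated and attributed to \citet{ginsbourger2013kernels}, and then immediately applied. So there is no ``paper's own proof'' to compare against.

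What you have written is the standard justification and matches the spirit of the cited reference: represent $k$ via a feature map (equivalently, as the covariance of a zero-mean process), observe that applying the same linear operator in each slot yields $\langle (L\Phi)(\mathbf{x}_1),(L\Phi)(\mathbf{x}_2)\rangle$, and conclude positive semi-definiteness from the norm identity. Your remarks on well-posedness (domain of $L$, legitimacy of interchanging $L$ with the inner product, the mean-square-differentiability criterion for differential $L$) are apt and in fact more scrupulous than the paper, which silently assumes applicability. For the operators actually deployed here --- finite symmetrization averages over reflections, rotations, and translations --- $L$ is a bounded finite linear combination of evaluation pullbacks, so all the regularity concerns you flag are trivially satisfied; your alternative ``difference quotient'' route is unnecessary in that setting but would be the right tool for the differential-operator case.
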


\vspace*{2mm}
\noindent
We can use this theorem to derive kernels for many different situations.
Examples include enforcing axial or rotational symmetry, or periodicity upon the model function.
To showcase the procedure, axial symmetry in two dimensions can be enforced by applying the operator
\begin{align}
    L(f(\mathbf{x}))~=~\frac{f([x^1,x^2]^T)+f([-x^1,x^2]^T)+f([x^1,-x^2]^T)+f([-x^1,-x^2]^T)}{4}
\end{align}
on a kernel function, which results in
\begin{align}\label{eq:symm_kernel}
    L_{\mathbf{x}_1}(k(\mathbf{x}_1,\mathbf{x}_2))~=~1/4~(&k(\mathbf{x}_1,\mathbf{x}_2)+k([-x^1_1, x^2_1]^T,\mathbf{x}_2) \nonumber \\
                                                           +&k([x^1_1, -x^2_1]^T,\mathbf{x}_2)+k([-x^1_1, -x^2_1]^T,\mathbf{x}_2)) \nonumber \\
    L_{\mathbf{x}_2}(k(\mathbf{x}_1,\mathbf{x}_2))~=~1/4~(&k(\mathbf{x}_1,\mathbf{x}_2)+k(\mathbf{x}_1,[-x^1_2, x^2_2]^T) \nonumber \\
                                                           +&k(\mathbf{x}_1,[x^1_2, -x^2_2]^T)+k(\mathbf{x}_1,[-x^1_2, -x^2_2]^T)) \nonumber \\
                                                            \Rightarrow & \nonumber \\
    L_{\mathbf{x}_2}(L_{\mathbf{x}_1}(k(\mathbf{x}_1,\mathbf{x}_2))) = 1/16~(&k(\mathbf{x}_1,\mathbf{x}_2)+k([-x^1_1, x^2_1]^T,\mathbf{x}_2) \nonumber \\
                                                                              +&k([x^1_1, -x^2_1]^T,\mathbf{x}_2)+k([-x^1_1, -x^2_1]^T,\mathbf{x}_2) \nonumber \\
                                                                              +&k(\mathbf{x}_1,[-x^1_2, x^2_2]^T) \nonumber \\
                                                                              +&k(\mathbf{x}_1,[x^1_2, -x^2_2]^T)+k(\mathbf{x}_1,[-x^1_2, -x^2_2]^T)\nonumber \\
                                                                              +&k([-x^1_1, x^2_1]^T,[-x^1_2, x^2_2]^T)+k([-x^1_1, x^2_1]^T,[x^1_2, -x^2_2]^T)\nonumber \\
                                                                              +&k([-x^1_1, x^2_1]^T,[-x^1_2, -x^2_2]^T)+k([x^1_1, -x^2_1]^T,[-x^1_2, x^2_2]^T)\nonumber \\
                                                                              +&k([x^1_1, -x^2_1]^T,[x^1_2, -x^2_2]^T)+k([x^1_1, -x^2_1]^T,[-x^1_2, -x^2_2]^T)\nonumber \\
                                                                              +&k([-x^1_1,-x^2_1]^T,[-x^1_2, x^2_2]^T)+k([-x^1_1,-x^2_1]^T,[ x^1_2, -x^2_2]^T)\nonumber \\
                                                                              +&k([-x^1_1, -x^2_1]^T,[-x^1_2, -x^2_2]^T)),
\end{align}
where $k(\mathbf{x}_1,\mathbf{x}_1)$ can be any kernel, for instance the anisotropic squared exponential kernel
\begin{align}
    k(\mathbf{x}_1,\mathbf{x}_1)~=~\exp{[-\frac{\langle \mathbf{x}_1-\mathbf{x}_2,\mathbf{M},\mathbf{x}_1-\mathbf{x}_2\rangle }{l}]}.
\end{align}
See Figure \ref{fig:symmetric_ackleys} for a presentation of the effects of such a kernel.
\begin{figure}[t!]
    \centering
    \includegraphics[width =  \linewidth]{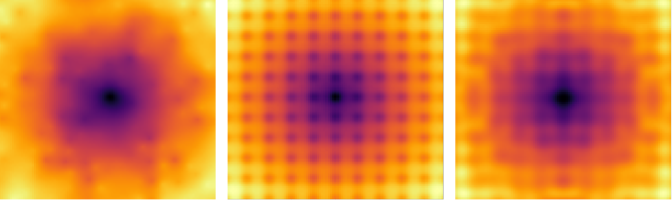}
    \put(-390,140){{\makebox(0,0)[lb]{standard kernel}}}
    \put(-240,140){{\makebox(0,0)[lb]{ground truth}}}
    \put(-105,140){{\makebox(0,0)[lb]{symmetric kernel}}}
    \caption{The powerful effect of kernel-based constraints on a GP regression. Displayed is Ackley's function (middle) and two GP posterior mean
    models. Left is the posterior mean calculated with an unconstrained Gaussian process. On the right is the posterior mean, calculated with imposed
    axial symmetry. In this example, the axial symmetry improves the uncertainty at a given number of measurements 4 fold and increases the computational speed 64 fold.}
    \label{fig:symmetric_ackleys}
\end{figure}
To inform the GP about periodicity in $x^2$ direction, we can define the linear operator
\begin{align}
    L(f(\mathbf{x}))~=~\frac{f([x^1 , x^2 ]^T)+
                              f([x^1 , x^2 + p ]^T)+
                              f([x^1 , x^2 - p ]^T)}{3},
\end{align}
from which the following kernel can be derived
\begin{align}
        L_{\mathbf{x}_2}&(L_{\mathbf{x}_1}(k(\mathbf{x}_1,\mathbf{x}_2))) = \nonumber \\
        1/9~(&k(\mathbf{x}_1,\mathbf{x}_2)+k(\mathbf{x}_1,[x^1_2, x^2_2 + p])^T +k(\mathbf{x}_1,[x^1_2, x^2_2 - p])^T \nonumber \\
        + & k([x^1_1, x^2_1 + p]^T,\mathbf{x}_2)+k([x^1_1, x^2_1 + p]^T,[x^1_2, x^2_2 + p]^T) +k([x^1_1, x^2_1 + p]^T,[x^1_2, x^2_2 - p]^T) \nonumber \\
        + & k([x^1_1, x^2_1 - p]^T,\mathbf{x}_2)+k([x^1_1, x^2_1 - p]^T,[x^1_2, x^2_2 + p]^T) +k([x^1_1, x^2_1 - p]^T,[x^1_2, x^2_2 - p]^T)),
\end{align}
where $p$ is the period.
Figure \ref{fig:periodic_kernel} presents a posterior mean model that results from such a kernel. 
\begin{figure}
    \centering
    \begin{subfigure}[t]{0.32\linewidth}
    \includegraphics[trim={1cm 0.8cm 0.9cm 0.8cm},clip,width = \linewidth]{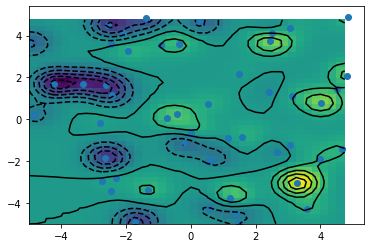}
    \end{subfigure}    
    \begin{subfigure}[t]{0.32\linewidth}
    \includegraphics[trim={1cm 0.8cm 0.0cm 0.0cm},clip,width = \linewidth]{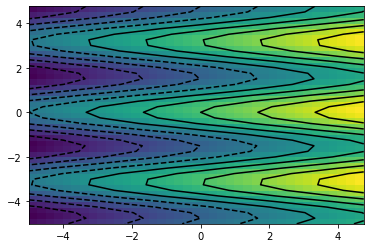}
    \end{subfigure}
    \begin{subfigure}[t]{0.32\linewidth}
    \includegraphics[trim={1cm 0.8cm 0.9cm 0.8cm},clip,width = \linewidth]{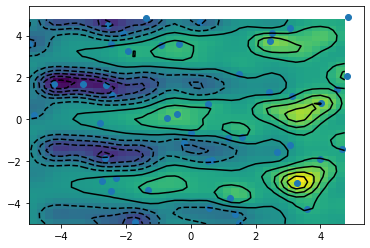}
    \end{subfigure}
    \caption{Posterior mean function given 50 data points for a standard GP on the left and a GP informing the posterior mean about periodicity on the right. The ground truth can be seen in the center. The blue points show the measurement locations. The periodicity from
    the kernel is enforced and used to inform the posterior mean in places without data.}
    \label{fig:periodic_kernel}
\end{figure}

\vspace*{2mm}
\noindent
One could argue that axial symmetry is more of academic than of practical interest, since the knowledge about axial symmetry can simply be accounted for by limiting the range for probing the function. However, the general principle can be extended to more complicated symmetries, such as rotational symmetry. For instance, the kernel for six-fold symmetry in two dimensions is defined as
\begin{equation}\label{eq:six-fold}
    k(\mathbf{x}_1,\mathbf{x}_2)=\frac{1}{36}\sum_{\phi \in p \pi/3}~ \sum_{\theta \in q \pi/3}~\tilde{k}(\mathcal{R}_{\phi}\mathbf{x}_1,\mathcal{R}_{\theta}\mathbf{x}_2);~p,q \in \{0,1,2,3,4,5\},
\end{equation}
where $\tilde{k}$ is any valid stationary kernel, and $\mathcal{R}$ is a rotation matrix rotating the vector $\mathbf{x}$ by a specified angle. The result of such a kernel can be seen in Figure \ref{fig:six-fold}.
\begin{figure}
    \centering
    \begin{subfigure}[t]{0.32\linewidth}
    \includegraphics[trim={1.2cm 0.8cm 0.0cm 0.0cm},clip,width = \linewidth]{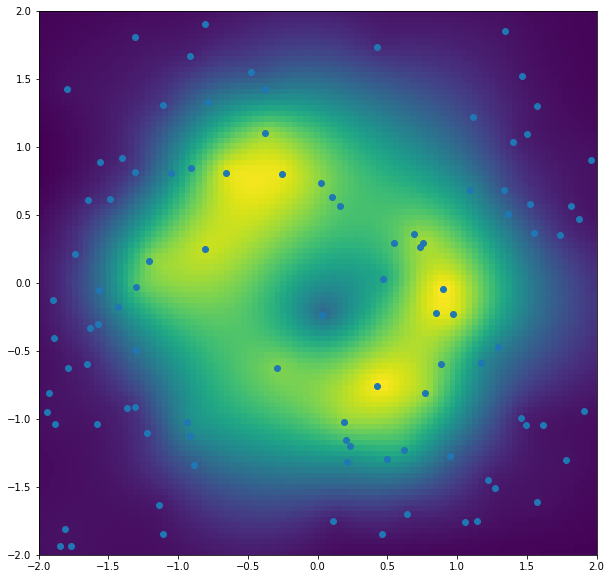}
    \end{subfigure}    
    \begin{subfigure}[t]{0.32\linewidth}
    \includegraphics[trim={1.2cm 0.8cm 0.0cm 0.0cm},clip,width = \linewidth]{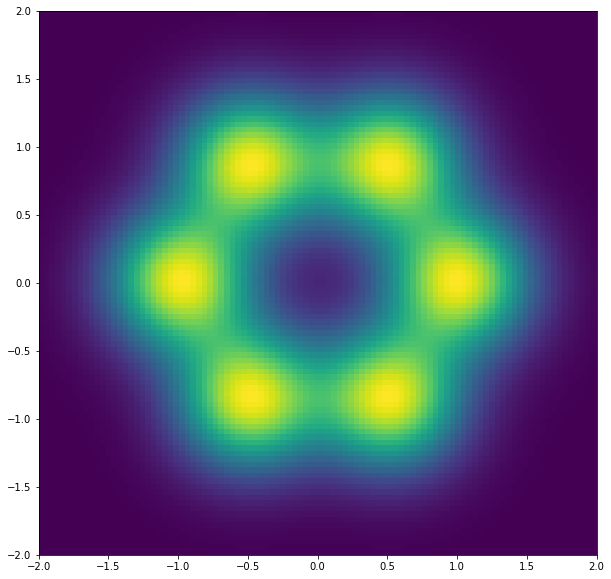}
    \end{subfigure}
    \begin{subfigure}[t]{0.32\linewidth}
    \includegraphics[trim={1.2cm 0.8cm 0.0cm 0.0cm},clip,width = \linewidth]{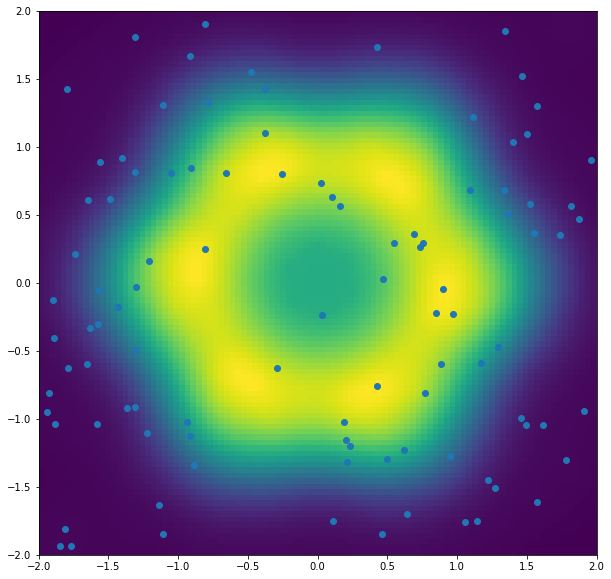}
    \end{subfigure}
    \caption{Six-fold symmetric test function (center) and GP approximations. The image on the left shows the standard GP posterior mean using the squared-exponential kernel. On the right, we see the GP approximation using kernel \eqref{eq:six-fold}. All points are implicitly used six times increasing the amount of information used for the regression six-fold.}
    \label{fig:six-fold}
\end{figure}
\subsection{The Essence of Stationary Kernels vs Non-Stationary Kernels}
The word kernel stems from the theory of integral operators. See reference \cite{williams2006gaussian} for more explanation on the origin of kernels and the  connection to integral operators.
Stationary kernels are of the form
\begin{equation}\label{eq:stat}
    k=k(||\mathbf{x}_1-\mathbf{x}_2||),
\end{equation}
i.e. they are function of a norm placed on $\mathcal{X}$.
Correlation of data that only depend on the distance of data points and not on the respective locations is referred to as
``stationary''.

\vspace*{2mm}
\noindent
In contrast, 
non-stationary kernels are more general p.s.d. functions of the form
\begin{equation}\label{eq:nonstat}
    k=k(\mathbf{x}_1,\mathbf{x}_2) ,
\end{equation}
where the kernel function now contains as arguments the location of the data points, and hence relaxes the restriction so that 
$k=k(\mathbf{x}_1,\mathbf{x}_2) \neq k(||\mathbf{x}_1-\mathbf{x}_2||)$.

\vspace*{2mm}
\noindent
Stationary and non-stationary kernels are both symmetric positive-definite functions, since they induce inner products in $\mathcal{H}$.
The difference between stationary and non-stationary kernels can be illustrated visually by a one-dimensional example.
If we consider $\mathcal{X}~\subset~\mathbb{R}^1$ and therefore $f=f(x)$, we can illustrate
the kernel as a function over $\mathcal{X}~\times~\mathcal{X}$ (see Figure \ref{fig:statvsnon-stat}).  Stationary
kernel functions are constant along diagonals, unlike non-stationary kernels.
This characteristic of non-stationary kernels translates into potentially highly flexible inner products, and therefore similarity measures, which can encode covariances within the input space, the output space and across the two spaces. 
\begin{figure}
    \centering
    \begin{subfigure}[t]{0.45\linewidth}
    \includegraphics[trim={0cm 0cm 0.0cm 0.0cm},clip,width = \linewidth]{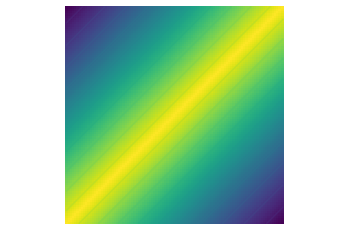}
    \end{subfigure}    
    \begin{subfigure}[t]{0.45\linewidth}
    \includegraphics[trim={0cm 0cm 0.0cm 0.0cm},clip,width = \linewidth]{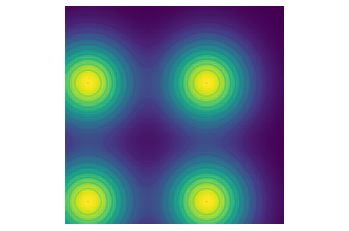}
    \end{subfigure}
    \caption{A top view onto a stationary (left) and non-stationary (right) kernel function over $\mathcal{X}~\times~\mathcal{X}$. 
    Both functions are symmetric; however,
    while the stationary kernel function
    is constant along the diagonals, the non-stationary kernel function has no such 
    restriction. Non-stationary kernels are therefore a much more flexible inner product in $\mathcal{H}$, which translates into a more flexible similarity measure. This added flexibility can be used to communicate information into remote corners of the index set.}
    \label{fig:statvsnon-stat}
\end{figure}
\subsection{Advanced Non-Stationary Kernel Designs}
Non-stationary kernels have the additional flexibility that they depend on the location of the input points, not only on the distance between them (see Equation \eqref{eq:nonstat}).
This gives a learning algorithm powerful additional capabilities since the similarity measure between data can vary substantially across $\mathcal{X}~\times~\mathcal{X}$ (see Figure \ref{fig:statvsnon-stat}).
Stationarity is an approximation that almost never holds in real data sets. Imagine a regression model of the topography of the United States.
While correlation lengths in the Sierra Nevada and in the Rocky Mountains are in the order of miles, they will be hundreds of miles in the Great Plains. Using a stationary kernel would perform poorly in such a scenario. Non-stationary kernels, on the other hand, can capture the varying length scales and lead to accurate function approximations. 
This extra flexibility is also useful for multi-output Gaussian processes in which distances between tasks are arbitrary and any stationary choice would limit the method's ability to learn. In fact, we will see how flexible non-stationary kernels can replace
tailored methods for multi-output GPR.

\vspace*{2mm}
\noindent
When designing advanced non-stationary kernels, we have to show that the resulting kernel functions are positive semi-definite, just like in the stationary case. 
However, it is often difficult to prove positive semi-definiteness in closed form for general functions.
Instead, one can induce positive semi-definiteness by taking advantage of the fact that, as mentioned earlier, the set of kernels is closed under addition, multiplication, and linear transformations. In addition, we can show that:

\vspace*{2mm}
\noindent
\begin{theorem}\label{theorem:ff}
Let $k(\mathbf{x}_1,\mathbf{x}_2)$ be a valid kernel, then $f(\mathbf{x}_1)f(\mathbf{x}_2)k(\mathbf{x}_1,\mathbf{x}_2)$
is also a valid kernel according to Definition \ref{def:kernel}. Here, $f(x)$ is an arbitrary function.
\end{theorem}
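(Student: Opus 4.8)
The plan is to check the two defining properties of a kernel from Definition~\ref{def:kernel} directly for $\tilde{k}(\mathbf{x}_1,\mathbf{x}_2) := f(\mathbf{x}_1)f(\mathbf{x}_2)k(\mathbf{x}_1,\mathbf{x}_2)$: symmetry and positive semi-definiteness. Symmetry is immediate, since scalar multiplication commutes and $k$ is symmetric, so $\tilde{k}(\mathbf{x}_1,\mathbf{x}_2) = f(\mathbf{x}_2)f(\mathbf{x}_1)k(\mathbf{x}_2,\mathbf{x}_1) = \tilde{k}(\mathbf{x}_2,\mathbf{x}_1)$.

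For positive semi-definiteness, the key observation is that the factors $f(\mathbf{x}_i)$ can be absorbed into the test coefficients. Given any $N$, any $\mathbf{x}_1,\dots,\mathbf{x}_N \in \mathcal{X}$, and any $\mathbf{c} \in \mathbb{R}^N$, I would set $d_i := c_i f(\mathbf{x}_i)$ and write
\begin{align}
    \sum_i^N \sum_j^N c_i c_j\, \tilde{k}(\mathbf{x}_i,\mathbf{x}_j) = \sum_i^N \sum_j^N \big(c_i f(\mathbf{x}_i)\big)\big(c_j f(\mathbf{x}_j)\big)\, k(\mathbf{x}_i,\mathbf{x}_j) = \sum_i^N \sum_j^N d_i d_j\, k(\mathbf{x}_i,\mathbf{x}_j) \geq 0,
\end{align}
where the final inequality holds because $k$ is a valid kernel and $\mathbf{d} = (d_1,\dots,d_N) \in \mathbb{R}^N$ is an admissible coefficient vector in the sense of Definition~\ref{def:kernel}. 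Since $N$, the points, and $\mathbf{c}$ are arbitrary, this establishes that $\tilde{k}$ is p.s.d., and hence a valid kernel.

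An equivalent route, if one prefers closure arguments over the definition, is to note that $g(\mathbf{x}_1,\mathbf{x}_2) := f(\mathbf{x}_1)f(\mathbf{x}_2)$ is itself a (rank-one) kernel, since each of its Gram matrices has the form $\mathbf{v}\mathbf{v}^T$ with $v_i = f(\mathbf{x}_i)$ and is therefore p.s.d.; then $\tilde{k} = g \cdot k$ is a kernel because the set of kernels is closed under multiplication, as already recorded in the text after Definition~\ref{def:kernel}.

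Honestly, there is no substantial obstacle: the result is essentially a one-line change of variables in the quadratic form. The only points worth care are that the argument uses no regularity of $f$ whatsoever (so ``arbitrary function'' is literally meant), and that the product is only positive \emph{semi}-definite even when $k$ is strictly positive definite, because $d_i$ vanishes wherever $f(\mathbf{x}_i) = 0$ --- which is precisely why the statement is phrased for p.s.d. kernels.
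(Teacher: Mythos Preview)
Your proof is correct and follows essentially the same route as the paper: absorb $f(\mathbf{x}_i)$ into the test coefficients (the paper writes the implication $\sum b_ib_jk(\mathbf{x}_i,\mathbf{x}_j)\ge 0 \Rightarrow \sum c_ic_jf(\mathbf{x}_i)f(\mathbf{x}_j)k(\mathbf{x}_i,\mathbf{x}_j)\ge 0$, which is exactly your substitution $d_i=c_if(\mathbf{x}_i)$). Your version is in fact slightly more complete, since you also verify symmetry explicitly and note the closure-under-multiplication alternative, both of which the paper leaves implicit.
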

\begin{proof}
$\sum_i^N\sum_j^N~ b_i~b_j~k(\mathbf{x}_i,\mathbf{x}_j)~\geq~0 ~\forall N,~\mathbf{x}\in~\mathbb{R}^N, ~\mathbf{b}\in~\mathbb{R}^N \\
\Rightarrow \sum_i^N\sum_j^N~ c_i~c_j~f(\mathbf{x}_i)~f(\mathbf{x}_j)~k(\mathbf{x}_i,\mathbf{x}_j)~\geq~0;~\forall\mathbf{c}\in~\mathbb{R}^N \\
\Rightarrow \sum_i^N\sum_j^N~f(\mathbf{x}_i)~f(\mathbf{x}_j)~k(\mathbf{x}_i,\mathbf{x}_j)~\geq~0~\forall N,~\mathbf{x}\in~\mathbb{R}^N$
\end{proof}

\vspace*{2mm}
\noindent
Since a constant is a valid kernel, $f(\mathbf{x}_1)f(\mathbf{x}_2)$ has to be a valid kernel too.

\vspace{2mm}
\noindent
The kernel $f(\mathbf{x}_1)f(\mathbf{x}_2)k(\mathbf{x}_1,\mathbf{x}_2)$ represents a trade-off between flexibility and simplicity and is used in our examples. Figure \ref{fig:non-stat_1d} shows that particular kernel for a linear function $f$ in one dimension, and underscores how using a stationary kernel leads to
under-estimated and over-estimated posterior variances when the length scales vary across $\mathcal{X}$.
In the example seen in Figure \ref{fig:non-stat2d}, we are using the kernel defined in Theorem \ref{theorem:ff} with
\begin{align}
f(\mathbf{x}_i)~=~(\phi_1~(\sqrt{50}-||\mathbf{x}_i||)) + \phi_2.
\end{align}
The result shows the incorrectly-estimated variances when the stationary kernel is used. 
\begin{figure}
    \centering
    \begin{subfigure}[t]{\linewidth}
    \includegraphics[trim={0cm 0cm 0.9cm 0.8cm},clip,width = \linewidth]{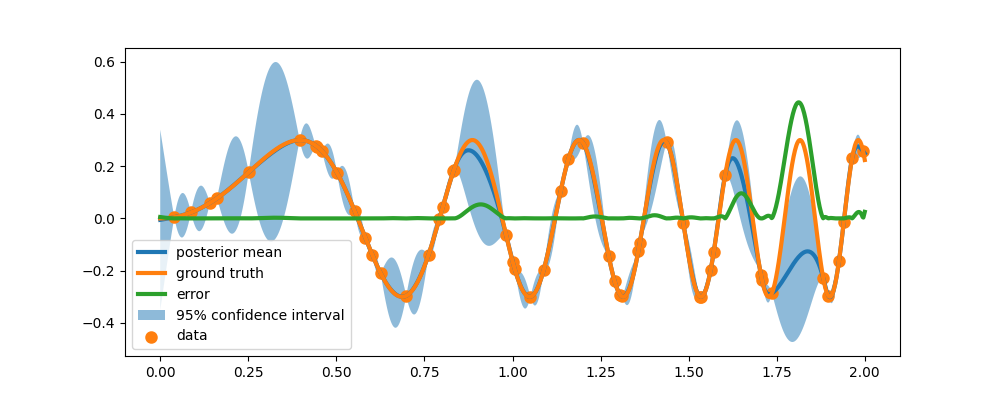}
    \end{subfigure}
    
    \begin{subfigure}[t]{\linewidth}
    \includegraphics[width = \linewidth]{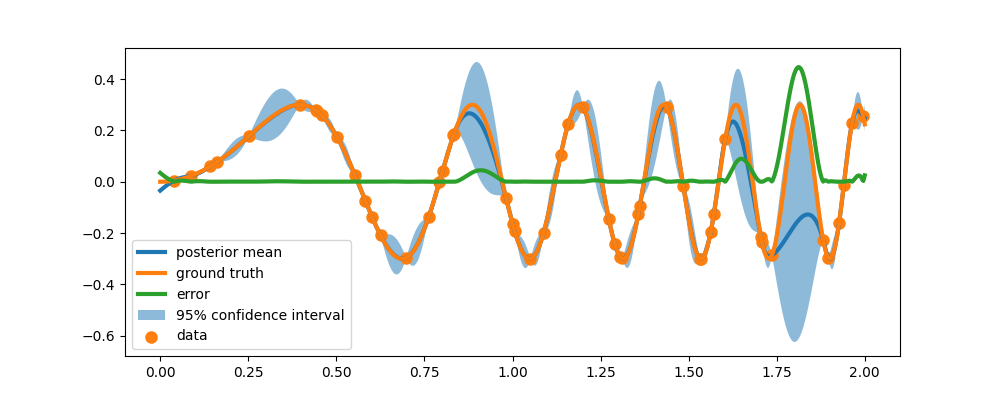}
    \end{subfigure}
    \caption{Comparison of a one-dimensional result of a Gaussian process using stationary (top) and non-stationary (bottom) kernels.
    In this figure, we use the kernel from Theorem \ref{theorem:ff} with a linear function, i.e. $k(x_1,x_2)=x_1 x_2 k_{matern}$.
    The stationary-kernel Gaussian process (top) significantly overestimates posterior variances on the left and underestimates them on the right. This is due to the fact that the similarity at a given distance is averaged across the domain. The non-stationary-kernel Gaussian process
    uses the location to compute the similarity and can therefore calculate the posterior variances more accurately.}
    \label{fig:non-stat_1d}
\end{figure}
\begin{figure}
    \centering
    \begin{subfigure}[t]{0.24\linewidth}
    \includegraphics[width = \linewidth]{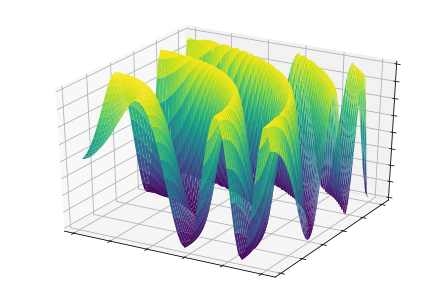}
    \end{subfigure}    
    \begin{subfigure}[t]{0.24\linewidth}
    \includegraphics[width = \linewidth]{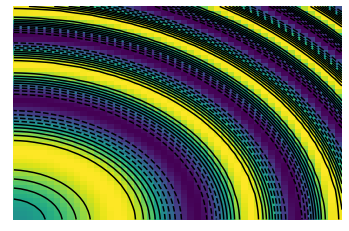}
    \end{subfigure}
    \begin{subfigure}[t]{0.24\linewidth}
    \includegraphics[width = \linewidth]{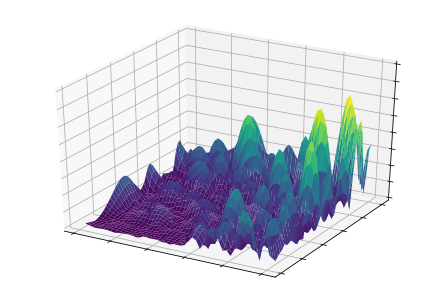}
    \end{subfigure}    
    \begin{subfigure}[t]{0.24\linewidth}
    \includegraphics[width = \linewidth]{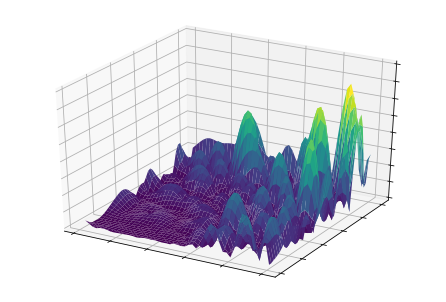}
    \end{subfigure}
    \put(-340,70){{\makebox(0,0)[lb]{ground truth}}}
    \put(-110,70){{\makebox(0,0)[lb]{$|\Delta|$}}}
    \put(-330,20){{\makebox(0,0)[lb]{$-1$}}}
    \put(-320,50){{\makebox(0,0)[lb]{$1$}}}
    \put(-115,20){{\makebox(0,0)[lb]{$0$}}}
    \put(-115,50){{\makebox(0,0)[lb]{$2$}}}
    \put(-10,20){{\makebox(0,0)[lb]{$0$}}}
    \put(-10,50){{\makebox(0,0)[lb]{$2$}}}
    
    \vspace{4mm}
    \begin{subfigure}[t]{0.24\linewidth}
    \includegraphics[width = \linewidth]{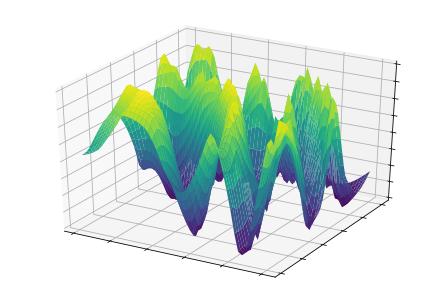}
    \end{subfigure}    
    \begin{subfigure}[t]{0.24\linewidth}
    \includegraphics[width = \linewidth]{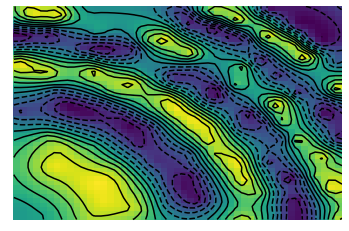}
    \end{subfigure}
    \begin{subfigure}[t]{0.24\linewidth}
    \includegraphics[width = \linewidth]{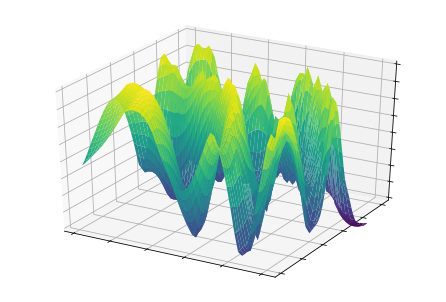}
    \end{subfigure}    
    \begin{subfigure}[t]{0.24\linewidth}
    \includegraphics[width = \linewidth]{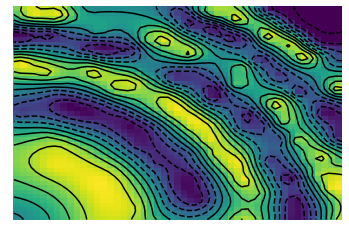}
    \end{subfigure}
    \put(-420,10){\rotatebox{90}{\makebox(0,0)[lb]{posterior mean}}}
    \put(-330,70){{\makebox(0,0)[lb]{\small stationary kernel}}}
    \put(-140,70){{\makebox(0,0)[lb]{\small non-stationary kernel}}}
    \put(-330,20){{\makebox(0,0)[lb]{$-1$}}}
    \put(-320,50){{\makebox(0,0)[lb]{$1$}}}
    \put(-115,20){{\makebox(0,0)[lb]{$-1$}}}
    \put(-115,50){{\makebox(0,0)[lb]{$1$}}}
    
    \vspace{4mm}
    \begin{subfigure}[t]{0.24\linewidth}
    \includegraphics[width = \linewidth]{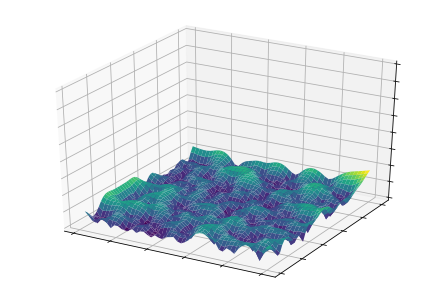}
    \end{subfigure}    
    \begin{subfigure}[t]{0.24\linewidth}
    \includegraphics[width = \linewidth]{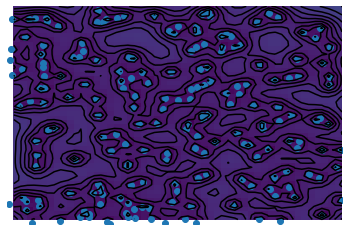}
    \end{subfigure}
    \begin{subfigure}[t]{0.24\linewidth}
    \includegraphics[width = \linewidth]{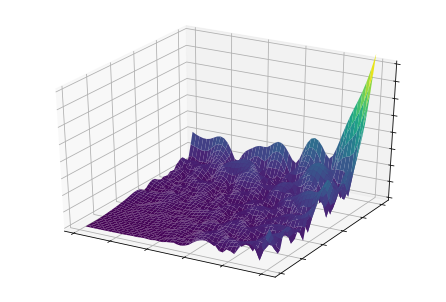}
    \end{subfigure}    
    \begin{subfigure}[t]{0.24\linewidth}
    \includegraphics[width = \linewidth]{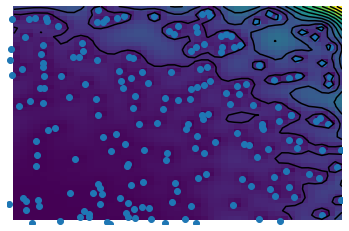}
    \end{subfigure}
    \put(-420,0){\rotatebox{90}{\makebox(0,0)[lb]{posterior variance}}}
    \put(-330,70){{\makebox(0,0)[lb]{\small stationary kernel}}}
    \put(-140,70){{\makebox(0,0)[lb]{\small non-stationary kernel}}}
    \put(-325,20){{\makebox(0,0)[lb]{$0$}}}
    \put(-320,50){{\makebox(0,0)[lb]{$2$}}}
    \put(-115,20){{\makebox(0,0)[lb]{$0$}}}
    \put(-115,50){{\makebox(0,0)[lb]{$2$}}}
    \caption{Comparison of a two-dimensional GP approximation of a function over $\mathcal{X}=[0,5]\times [0,5]$ using stationary and non-stationary kernels. Similar to what we have seen in Figure \ref{fig:non-stat_1d}, the
    posterior variance is significantly overestimated close to the origin where the frequency of the function is low. 
    The farther we move away from the origin, the more does the stationary-kernel GP underestimate the posterior variance. This is of significant impact, for instance, for autonomous data-acquisition where the posterior variance plays a large role in the choice of the next measurement.}
    \label{fig:non-stat2d}
\end{figure}

\vspace*{2mm}
\noindent
A particularly flexible kernel, introduced by \cite{paciorek2006spatial} and reformulated and enhanced by \cite{risser2015regression}, 
is defined as 
\begin{equation}\label{eq:non_stat_risser1}
    k(\mathbf{x}_1,\mathbf{x}_2)~=~ \frac{\sigma_s^2(\mathbf{x}_1)~\sigma_s^2(\mathbf{x}_2)}{\sqrt{|\frac{\pmb{\Sigma}(\mathbf{x}_1)+\pmb{\Sigma}(\mathbf{x}_2)}{2}|}} \mathcal{M}(\sqrt{Q(\mathbf{x}_1,\mathbf{x}_2)})
\end{equation}
where 
\begin{equation}\label{eq:non_stat_risser2}
    Q(\mathbf{x}_1,\mathbf{x}_2) = (\mathbf{x}_1 - \mathbf{x}_2)^T~ \Big( \frac{\pmb{\Sigma}(\mathbf{x}_1)+\pmb{\Sigma}(\mathbf{x}_2)}{2} \Big) ^{-1}~(\mathbf{x}_1 - \mathbf{x}_2),
\end{equation}
where $\mathcal{M}$ is the Mat\'ern kernel. This kernel allows for non-constant signal variances, length scales and anisotropies. In addition, it takes only a small adjustment to vary the differentiability of the model within $\mathcal{X}$. The presented non-stationary kernel designs, together with an efficient way to find the hyperparameters, renders specialized techniques for multi-task GPs obsolete. The difference between single-task and multi-task Gaussian processes can be entirely contained within the kernel design as will be discussed in the next section.
\subsection{Using Flexible Non-Stationary Kernels for Multi-Task GPR}
The challenge of multi-task GPs is that there is no natural distance between the different tasks (in $\mathcal{X}_o$). Furthermore, this distance between tasks may change depending on $\mathbf{x}_i~\in~\mathcal{X}_i$. This is why early on, several tasks would just be seen as independent Gaussian processes. Clearly, in this approach, cross-task covariances are ignored and learning can only happen within a task. To circumvent this problem, several techniques have been proposed. In the following overview, we are following the survey in \cite{borchani2015survey}, which broadly divided the multi-output regression methods into ``problem transformation methods'' and ``algorithm adaption methods''. Algorithm adaption methods are based on additional techniques and methodologies, such as support vector regression, to encode the correlation between tasks. It is able to capture cross-task correlations but, as the name suggests, needs the adaption of the basic algorithm and the theory, which often leads to new issues. Problem transformation methods, on the other hand, are based on transforming the problem into several single-task problems, creating separate models for them, and concatenating all the models. It is often criticized for not capturing the intricate correlations between the tasks. Problem transformation methods leave the basic theory of GPs intact and are therefore more general, widely applicable and avoid common problems like missing data in one or more tasks. 

\vspace*{2mm}
\noindent
We argue that the GP framework is, without alteration, able to account for the correlation between multiple tasks. For that, we draw attention to the fact that problem transformation methods in combination with flexible non-stationary kernels do not suffer from the limited ability to encode cross-task covariances.
Problem transformation methods commonly make use of separable, stationary kernels that fail to encode cross-task correlations due to arbitrary distances between tasks. We can address this issue by using flexible kernel definitions, we present in this paper.
Using advanced non-stationary kernels liberates us from the problems of multi-output Gaussian processes; we can assume a constant, arbitrary distance between the tasks
and the kernel will learn how these distances translate into similarities as a function on the index set $\mathcal{X}$.
This leaves the basic theory of GPs untouched, and is therefore
robust against common multi-task GP problems such as missing data in a subset of tasks or poor interpretability.
The advanced kernels needed for the extra flexibility come at the cost of many hyperparameters 
we have to find, which however, can be countered with clever optimization procedures.

\vspace*{2mm}
\noindent
As alluded to earlier, instead of interpolating a vector-valued function over the input space, we approximate a scalar function $f(\mathbf{x})$ on $\mathcal{X}=\mathcal{X}_i~\times~\mathcal{X}_o$.
The norm on the RKHS induces a metric which is entirely defined by the kernel and therefore extends also into the output space. Therefore, a flexible kernel overcomes the challenge of arbitrarily defined distances between tasks. In this framework, we assume $\mathcal{X}_o$ to be a subset of the indexing set  
which leads us to refer to this special kind of multi-task Gaussian processes as function-valued Gaussian processes (fvGP) --- one could imagine the outputs of an evaluation to be itself a function over $\mathbb{R}^n$.
To reiterate the key takeaway of this section, the main difference of a multi-output, or function-valued GP and a single-task GP is the choice of the kernel. The main problem with this approach is the vastly increased number of hyperparameters that have to be found. This issue will briefly be discussed in the next section.

\vspace*{2mm}
\noindent
In Figure \ref{fig:fvgp}, we defined two different tasks that show a particularly high correlation between the circled areas.
This correlation is reflected in the covariance matrix. 
The kernel for this example is defined in Equations \eqref{eq:non_stat_risser1} and \eqref{eq:non_stat_risser2} with
\begin{align}\label{eq:fvgp}
    \pmb{\Sigma}(\mathbf{x})~=&~
    \begin{pmatrix}
         l(\mathbf{x}_1,\mathbf{x}_2) & 0 & 0 \\
         0 & l(\mathbf{x}_1,\mathbf{x}_2) & 0 \\
         0 & 0 & l(\mathbf{x}_1,\mathbf{x}_2) \\
    \end{pmatrix} \nonumber \\
    l(\mathbf{x}_1,\mathbf{x}_2)~=&~h_1 + (h_2~(\exp{[(\mathbf{x}_1-\mathbf{h}_1)^T~\pmb{M}(\mathbf{x}_1-\mathbf{h}_1)]}+\exp{[(\mathbf{x}_2-\mathbf{h}_2)^T~\pmb{M}(\mathbf{x}_2-\mathbf{h}_2)]}))\nonumber  \\
\pmb{M}~=&~
\begin{pmatrix}
     h_3 & 0  & 0   \\
     0   & h_3& 0   \\
     0   & 0  & h_3 \\
\end{pmatrix},
\end{align}
where all $h_i$ and $\mathbf{h}_i$ are found by the training process. The results in Figure \ref{fig:fvgp} show that a flexible non-stationary kernel can encode complicated non-local covariances that will be used for the approximation and uncertainty quantification.
\begin{figure}
    \centering
    \includegraphics[width = \linewidth]{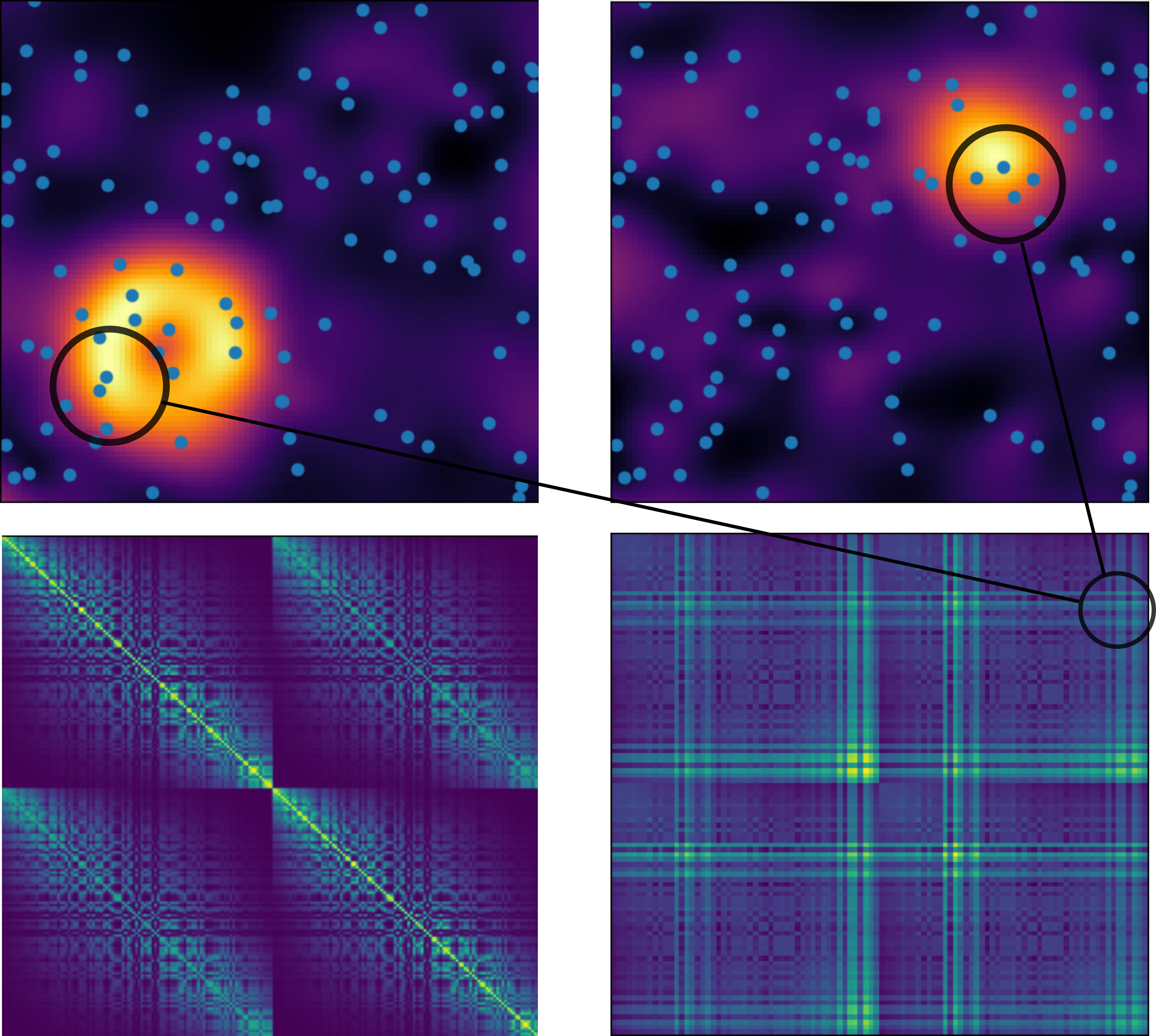}
    \caption{Representation of two tasks (top) and the associated covariance matrices (bottom) of a multi-task Gaussian process using 100 random data points. In the bottom left, the covariance is computed using a stationary kernel; while cross-covariances are not ignored (quadrant 1 and 3), they are just offset by a constant.
    The covariance in the bottom right is defined by a non-stationary kernel. This allows the Gaussian process to learn that the circled regions are correlated. The non-stationary kernel used in this example is shown in Equations \eqref{eq:non_stat_risser1} and \eqref{eq:non_stat_risser2}, using the terms defined in \eqref{eq:fvgp}. The central point here is that the fundamental difference between single-task and multi-task Gaussian processes lies in the kernel design. Note the checkerboard pattern; the length scale function is the sum of two Gaussian functions. When one point is close to one of them, the covariance with all other data points will be comparatively large. However, the covariance reaches its maximum when both points are located at the center of one or the other Gaussian function.}
    \label{fig:fvgp}
\end{figure}
\section{A Note on Optimizing the Marginal Log-Likelihood when using Advanced Kernel Designs}
As mentioned throughout this paper, the main issue that accompanies 
advanced kernel design is the number of hyperparameter we have to find. 
We can think of the hyperparameters as a vector $\pmb{\phi}~\in~\mathbb{R}^n$. 
When using standard kernel definitions, $n$ is often two or three. This number is significantly larger for
advanced stationary and especially non-stationary kernels. We have shown that we can invoke functions over $\mathcal{X}$
into the kernel definitions. These functions can be defined as the sum of arbitrary basis functions.
Their locations and coefficients are also hyperparameters and have to be found. This example makes clear that the number of hyperparameters $n$ can quickly rise to numbers that makes the marginal-log-likelihood optimization a lengthy procedure. To find the global or a high-quality local optimum,
an optimization of this scale commonly needs many function evaluations to succeed. However, each function evaluation of
the marginal-log-likelihood function is potentially costly since it involves an inversion or a system solve. 
One possible solution to the problem is the use of hybrid optimization algorithms that can run in parallel to 
the GP prediction and can 
provide best-estimate optima whenever queried. 
The log-likelihood function,
its gradient
and the Hessian 
evaluations can be accelerated using GPU computer architectures. 
In addition, we can start many local searches in parallel --- and remove the found optima by deflation --- to take full advantage of HPC computer architecture.
This is an important branch of our work and will be presented in the near future \cite{noack2017hybrid}.  
\section{Experiments}
In this section, we want to show the potential impact of the proposed kernel designs on two scientific experiments, namely neutron scattering and IR spectroscopy. The shown data has been collected at the Thales instrument at the Institute Laue-Lagevin (ILL)  in France,
and at the Berkeley Synchrotron Infrared Structural Biology (BSISB) beamline at the
Advanced Light Source (ALS) at Lawrence Berkeley National Laboratory (LBNL) in Berkeley, California. 
At these instruments, our work on 
Gaussian Processes is used for autonomous data acquisition and general analysis and interpretation purposes \cite{noack_review}.
We show how the presented improvements of kernel designs can advance the use of Gaussian processes in these experiments, and influence
the experimental design and the resulting model approximation.
\subsection{IR Spectroscopy}
Infrared (IR) imaging spectroscopy employs full infrared spectra in order to study materials and biological samples. This is done by directing an infrared beam onto the sample at a point ($x_1,x_2$). Therefore, we can define the input space 
$\mathcal{X}_i~\subset~\mathbb{R}^2$ with outputs composed of entire spectra at selected points in the input set, i.e., 
$\mathcal{X}_o~\subset~\mathbb{R}_+^1$. As before, the final index set is defined by 
$\mathcal{X}=\mathcal{X}_i \times \mathcal{X}_o~\subset~\mathbb{R}^3$. We will assume that a spectrum is represented by 87 intensity values
at a set of wave numbers. 
This example illustrates the duality of a multi-task GP over $\mathcal{X}_i$ and a single-task GP over $\mathcal{X}_i~\times~\mathcal{X}_o$, and that the difference can be contained within the used kernel. Here, the ``tasks'' have a natural distance between them, with the unit $\text{cm}^{-1}$ of a wave number.
This is not always the case. Instead of spectra, we could approximate the PCA components of spectra, 
which do not have a natural distance. 
The stationary kernel, we are using for this example, is defined as
\begin{align}
    & k(\mathbf{x}_1,\mathbf{x}_2)~=~k(|\mathbf{x}_1-\mathbf{x}_2|) = \nonumber \\
    & \sigma^2~k_{exp}\Big(|\begin{bmatrix} x_1^1 \\ x_1^2 \end{bmatrix} - \begin{bmatrix} x_2^1 \\ x_2^2 \end{bmatrix}|\Big)~
    M(|x_1^3-x_2^3|),
    \label{eq:ir_stat}
\end{align}
where $k_{exp}$ is the exponential kernel and $M$ is a Mat\'ern kernel. The Euclidean distance in the exponential kernel
is anisotropic.
The non-stationary kernel is defined by
\begin{align}
    & k(\mathbf{x}_1,\mathbf{x}_2)~= \nonumber \\
    & \sigma^2~(k_{exp}\Big(|\begin{bmatrix} x_1^1 \\ x_1^2 \end{bmatrix} - \begin{bmatrix} x_2^1 \\ x_2^2 \end{bmatrix}|,\phi_7,\phi_8 \Big)~
    M(|x_1^3-x_2^3|, \phi_{9})+A_1~A_2), \nonumber \\
    &A_1 = \exp{[(x_1^3-p_1(\mathbf{x}))^2]/\phi_6} + \exp{[(x_1^3-p_2(\mathbf{x}))^2]/\phi_6} \nonumber \\
    &A_2 = \exp{[(x_2^3-p_1(\mathbf{x}))^2]/\phi_6} + \exp{[(x_2^3-p_2(\mathbf{x}))^2]/\phi_6} \nonumber \\
    &p_1(\mathbf{x}) = \phi_0~(\phi_1~x^1)+(\phi_2~x^2) \nonumber \\
    &p_2(\mathbf{x}) = \phi_3~(\phi_4~x^1)+(\phi_5~x^2),
    \label{eq:ir_non_stat}
\end{align}
where $\phi$ is a set of hyperparameters. The focus in this expression is on $A$, which allows the covariance to depend on two
Gaussian functions that can change position in $\mathcal{X}_o$ as a linear function in $\mathcal{X}_i$. 
Figure \ref{fig:ir_nonstat} shows that the Gaussian process will take advantage of the given additional flexibility of non-stationary kernels and thereby lower the overall approximation error.

\begin{figure}
    \centering
    \begin{subfigure}[t]{0.4\linewidth}
    \includegraphics[trim = 3.65cm 1.35cm 3cm 1cm,clip, width = \linewidth]{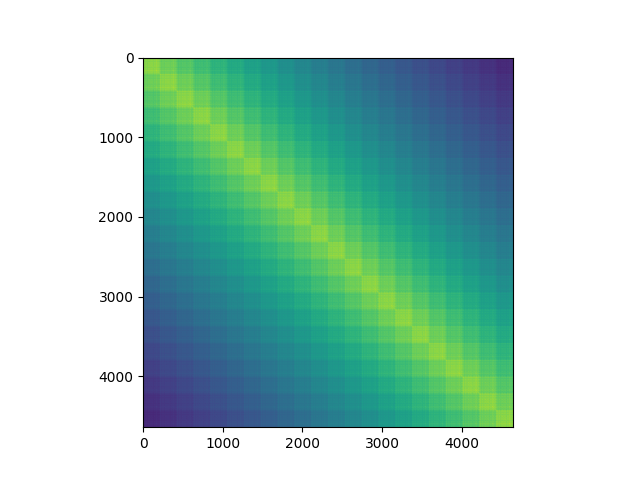}
    \caption{}
    \end{subfigure} 
    ~~~~~
    \begin{subfigure}[t]{0.4\linewidth}
    \includegraphics[trim = 3.65cm 1.35cm 3cm 1cm,clip, width = \linewidth]{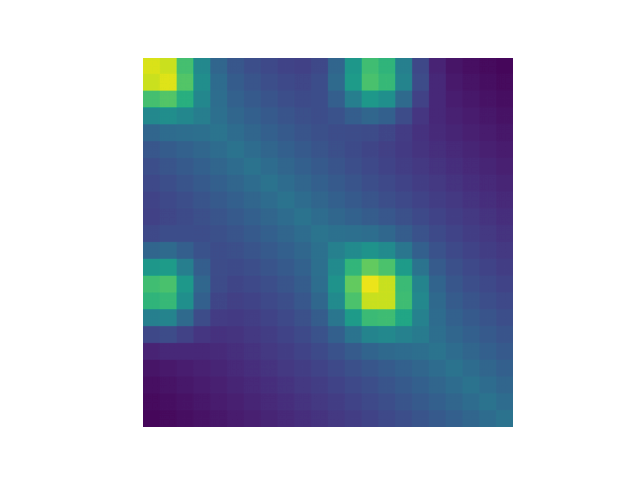}
    \caption{}
    \end{subfigure}
    
    \begin{subfigure}[t]{\linewidth}
    \includegraphics[width = \linewidth,height = 6cm]{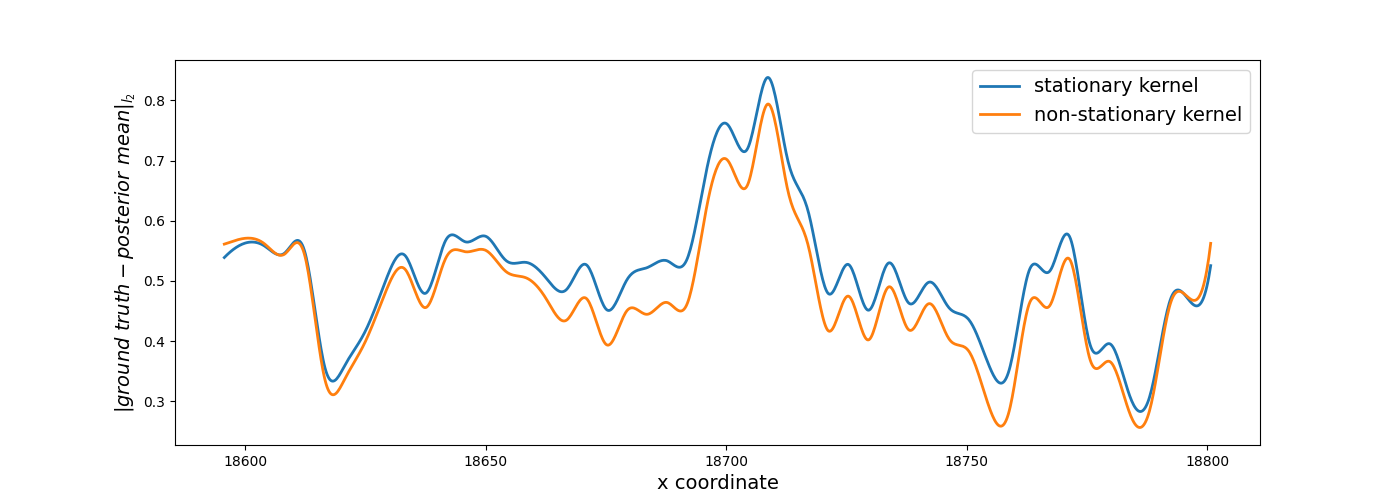}
    \caption{}
    \end{subfigure}
    
    \begin{subfigure}[t]{\linewidth}
    \includegraphics[width = \linewidth,height = 6cm]{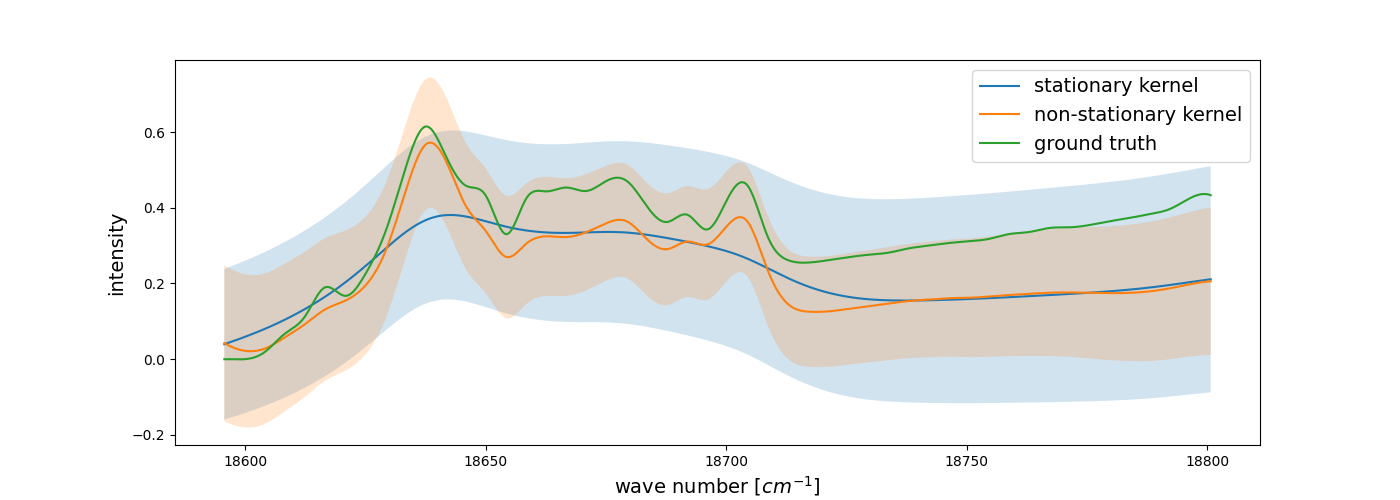}
    \caption{}
    \end{subfigure}
    \caption{Presentation of a GP used on infrared spectroscopy data deploying stationary and non-stationary kernels. (top) Simplified view on the covariance matrices resulting from stationary (a) and non-stationary (b) kernel definitions (Equations \eqref{eq:ir_stat} and \eqref{eq:ir_non_stat} respectively). The covariance matrix resulting form the stationary kernel (a), cannot identify differing similarities between tasks when their distance is constant; therefore we see a diagonal pattern of the covariance matrix. In (b) we see the covariance matrix resulting from a non-stationary kernel which is able to identify similarities between any two tasks independently. Tasks in this case can be understood as the spectrum intensity at a particular wave number. (c) The mean of the Euclidean distance between the posterior means and ground truth. The GP using the non-stationary kernel performs significantly better. (d) Posterior means and the ground truth of a representative spectrum. Not only is the approximation using the non-stationary kernel significantly more accurate, the posterior variance is overall smaller and more detailed.}
    \label{fig:ir_nonstat}
\end{figure}

\subsection{Neutron Scattering}
Neutron scattering is an experimental technique to obtain detailed information about the arrangements of atoms in condensed matter. 
The data showcasing symmetry comes from 
the Thales (Three Axis Low Energy Spectrometer) at ILL \cite{weber2019polarized}.
The measurements probe a function $S(q_h,q_l,q_k,E)$ which is often symmetric around one or more axes. Figure \ref{fig:ns_symm}
shows how effectively kernels that impose symmetry \eqref{eq:symm_kernel} can be used to approximate the model function and to steer data acquisition in the presence of symmetry. The higher-quality approximation translates into less points that are needed for a targeted model accuracy. 
\begin{figure}
    \begin{subfigure}[t]{0.48\linewidth}
    \includegraphics[height = 5cm]{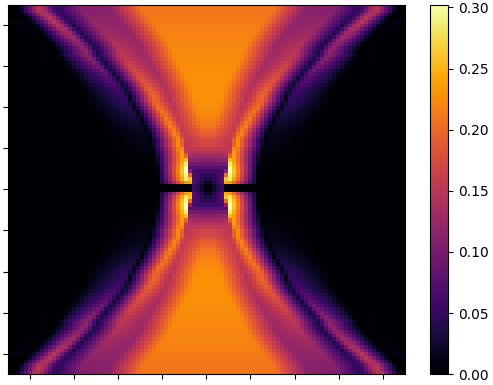}
    \caption{}
    \end{subfigure}    
    \begin{subfigure}[t]{0.48\linewidth}
    \centering
    \includegraphics[height = 5cm]{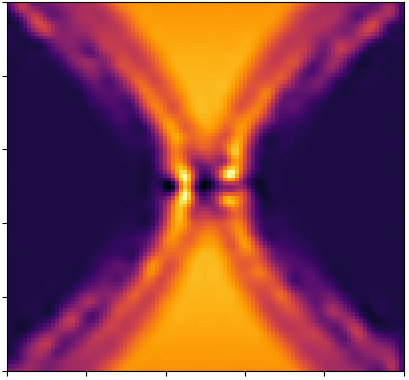}
    \caption{}
    \end{subfigure}
    \put(-420,50){\rotatebox{90}{\makebox(0,0)[lb]{$Energy$}}}
    \put(-340,-5){{\makebox(0,0)[lb]{$q_h$}}}
    \put(-105,-5){{\makebox(0,0)[lb]{$q_h$}}}
    
    \begin{subfigure}[t]{0.48\linewidth}
    \includegraphics[height = 5cm]{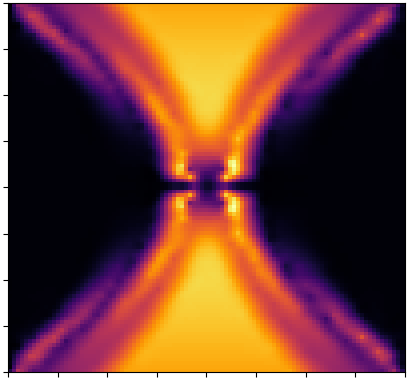}
    \caption{}
    \end{subfigure}
    \begin{subfigure}[t]{0.48\linewidth}
    \centering
    \includegraphics[width = \linewidth]{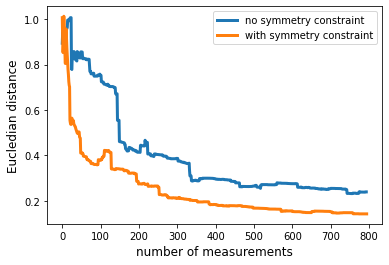}
    \caption{}
    \end{subfigure} 
    \put(-420,50){\rotatebox{90}{\makebox(0,0)[lb]{$Energy$}}}
    \put(-340,-10){{\makebox(0,0)[lb]{$q_h$}}}
    \caption{Figure displaying the importance of symmetry in neutron scattering data acquisition. The function $S=S(q_h,q_l,q_k,E)$ is symmetric around $E~=~0$ for the slice of interest. Exploiting this fact as a constraint, which is enforced by advanced kernel design, increases the accuracy of the GP interpolation significantly. (a)  The ground truth function $S(q_h,q_l=0,q_k=0,E)$.  (b) The GP posterior mean when a standard exponential kernel is used. (c) The GP posterior mean with a kernel that enforces symmetry. (d) The Euclidean error of both approximations after a number of measurements.}
    \label{fig:ns_symm}
\end{figure}
\section{Discussion and Conclusion}
In this paper, we presented some known and new kernel designs which are
of interest for practitioners using GPs.
The presented kernels are able to significantly reduce uncertainty of the model,
given a number of data points. This was either achieved 
by using stationary kernels that implicitly subject the posterior mean to hard constraints, such as periodicity or symmetry, or by
non-stationary kernels that are able to encode flexible inner products which translates into the ability to
learn more complicated covariances across the input space $\mathcal{X}$. This led to a flexible and powerful formulation of multi-task Gaussian processes. 

\vspace{2mm}
\noindent
Using the appropriate kernels,
knowledge that the model function is additive can result in an immensely powerful
Gaussian process in which information from the sides of the domain can be propagated to infinity (Fig. \ref{fig:additive}).
Multiplicative kernels have no such property, but are perfectly suited for allowing axial anisotropy in $\mathcal{X}$.

\vspace{2mm}
\noindent
Figure \ref{fig:symmetric_ackleys} showed how the quality of the posterior mean can increase when
symmetry is present and accounted for by the kernel. The same was shown for experimental data in 
Figure \ref{fig:ns_symm}. Symmetry implicitly increases the amount of information that is used in the prediction.
For instance, in case of axial symmetry around the $x$ and $y$ axes, every data point contains four times the amount of information -- compared to the use of a standard kernel --- decreasing the computational cost for a given function-approximation problem by 64 (assuming $O(N^3)$ scaling). We have shown that periodicity can be accounted for in the same manner. Note that the imposed periodicity
is not the same as a sine kernel, since imposing periodicity does not impose any particular functional shape.

\vspace{2mm}
\noindent
Lastly, we investigated and drew attention to non-stationary kernels and their impact on a Gaussian process. 
We have seen that a GP with a constant length scale is prone to both
over- and underestimating the posterior variance (Figures \ref{fig:non-stat_1d} and \ref{fig:non-stat2d}) which has major 
implications for decision-making algorithms which use the posterior covariance, e.g., for optimal experiment design.
Non-stationary kernels can also be used to obtain a flexible and simple implementation of multi-output GPs.
In an experimental setting, the non-stationary kernels led to an overall high-quality approximation (Fig. \ref{fig:ir_nonstat}).
We have rediscovered that the main difference between single and multi-task GPs can be entirely contained within the kernel design, which
leaves the basic theory of GPs untouched.

\vspace{2mm}
\noindent
The use of advanced kernel designs does not come without issues, especially in the multi-task setting.
The biggest issue is the added computational costs. Clearly, the marginal log-likelihood optimization
becomes a much more involved process as the number of hyperparameters increases. But there are ways to overcome this shortcoming.
Traditional Gaussian-process training uses standard optimization procedures to find the hyperparameters, such as multi-start gradient descent.
When the optimization has to find more hyperparameters, more sophisticated, HPC-ready algorithms have to be used. In an optimal and sequential design setting,
the optimization can happen asynchronously, so that the costs of the training are hidden. Such an algorithm is being developed by the authors.  


\section*{Acknowledgment}
The work was funded through the Center for Advanced
Mathematics for Energy Research Applications 
(CAMERA), which is jointly funded by the 
Advanced Scientific Computing Research (ASCR) 
and Basic Energy Sciences (BES) within the 
Department of Energy's Office of Science, under Contract No. DE-AC02-05CH11231. 
The data was provided by the BSISB program (DOE No. DE-AC02-05CH11231) and the Thales project at ILL, France.
We want to thank the groups for the collaboration and data. 

\section*{Author Contribution}
M.M.N developed the mathematics and wrote the first draft of the manuscript. J.A.S. supervised the work, verified the correctness of the mathematical derivations and revised the manuscript.

\clearpage
\bibliographystyle{plainnat}
\bibliography{literature}
\end{document}